\theoremstyle{plain}
\newtheorem{theorem}{Theorem}[section]
\newtheorem*{theorem*}{Theorem}
\theoremstyle{definition}
\newtheorem{definition}[theorem]{Definition}
\theoremstyle{remark}
\newtheorem{remark}[theorem]{Remark}
\def\x{{\boldsymbol{x}}}
\def\bo{{\boldsymbol{\omega}}}
\def\bb{{\boldsymbol{\beta}}}
\def\g{{\boldsymbol{g}}}
\def\a{{\boldsymbol{a}}}
\def\c{{\boldsymbol{c}}}
\def\b{{\boldsymbol{b}}}
\def\0{{\boldsymbol{0}}}
\def\h{{\mathbf{h}}}
\def\trans{^{\rm T}}
\def\f{{\mathbf{f}}}
\icmltitlerunning{Efficient Label Shift Adaptation}
\begin{document}

\twocolumn[
\icmltitle{ELSA: Efficient Label Shift Adaptation through the Lens of Semiparametric Models}



\icmlsetsymbol{equal}{*}

\begin{icmlauthorlist}
\icmlauthor{Qinglong Tian}{equal,waterloo}
\icmlauthor{Xin Zhang}{equal,meta}
\icmlauthor{Jiwei Zhao}{madison}
\end{icmlauthorlist}

\icmlaffiliation{waterloo}{Department of Statistics and Actuarial Science, University of Waterloo, Waterloo, ON, Canada}
\icmlaffiliation{meta}{Meta Inc., Menlo Park, CA, USA}
\icmlaffiliation{madison}{Department of Biostatistics and Medical Informatics, University of Wisconsin-Madison, Madison, WI, USA}

\icmlcorrespondingauthor{Jiwei Zhao}{jiwei.zhao@wisc.edu}

\icmlkeywords{Label Shift, Domain Adaptation, Moment Matching, Semiparametric Model}

\vskip 0.3in
]



\printAffiliationsAndNotice{\icmlEqualContribution} 

\begin{abstract}

We study the domain adaptation problem with label shift in this work.
Under the label shift context, the marginal distribution of the label varies across the training and testing datasets, while the conditional distribution of features given the label is the same.
Traditional label shift adaptation methods either suffer from large estimation errors or require cumbersome post-prediction calibrations.
To address these issues, we first propose a moment-matching framework for adapting the label shift based on the geometry of the influence function.
Under such a framework, we propose a novel method named \underline{E}fficient \underline{L}abel \underline{S}hift \underline{A}daptation (ELSA), in which the adaptation weights can be estimated by solving linear systems.
Theoretically, the ELSA estimator is $\sqrt{n}$-consistent ($n$ is the sample size of the source data) and asymptotically normal.
Empirically, we show that ELSA can achieve state-of-the-art estimation performances without post-prediction calibrations, thus, gaining computational efficiency.

\end{abstract}


\section{Introduction}

\subsection{Background}

Traditional supervised learning assumes that the training and testing data are from the same joint distribution $p(\x,y)$, where $\x$ are the features, and $y$ is the label.
Therefore, we can apply the predictive model from the training data to do inferences on the testing data.
However, in many real-world applications, the joint distribution $p(\x,y)$ may often differ in the training and testing data, and this phenomenon is called \textit{distributional shift} (see \citealt{quinonero2008dataset}).
Thus, the knowledge learned from the training data may no longer be appropriate to make predictions directly from the testing data.
This motivates the study direction called \textit{unsupervised domain adaptation} (see \citealt{kouw2021}), which aims to address the distributional shift between the source domain $p_s(\x,y)$ (i.e., training data) and the target domain $p_t(\x,y)$ (i.e., testing data), where the subscript $s$ and $t$ represent source and target domain respectively.

In this work, we focus on studying the case of \textit{label shift}, also known as prior probability shift (see \citealt{moreno2012}).
The label shift refers to the phenomenon that the marginal distributions of the labels differ in the source and target domains $p_s(y)\neq p_t(y)$ though the conditional distributions of features given the label are the same in both domains $p_s(\x| y)=p_t(\x| y)$.
Label shift aligns with the anticausal learning setting (see \citealt{storkey2009}) where the label $y$ is the cause and the features $\x$ are the effects.
For example, in the task of disease diagnostic, suppose the label $y$ indicates whether a person has been infected, and $\x$ are the observed symptoms.
It is reasonable to assume the identical conditional distributions $p(\x|y)$, as the infection-on-symptoms mechanism should be the same.
Consider the source and target data are from two regions with and without the corresponding prevention so that $p_s(y)\neq p_t(y)$.
Then with the same symptoms, people without prevention are more likely to be infected by the disease compared to those with good prevention (i.e., $p_s(y|\x)\neq p_t(y|\x)$).
Thus, it is important to perform  label shift adaptation to make the trained predictive model applicable for the inference on the target domain.


\subsection{Problem Settings and Preparations}

We first formally define the problem and notations.
The observable data consists of two parts: the labeled source data $\left\{(\x_i,y_i)\right\}_{i=1}^{n}$ and unlabeled target data $\left\{\x_i\right\}_{i=n+1}^{n+m}$.
A generic notation $p_s(\cdot)$ and $p_t(\cdot)$ denotes distributions on the source and target domain, respectively.
The label $y$ is a $k$-class variable with support $\mathcal{Y}=\left\{1,\dots,k\right\}$ on the source domain.
The support on the target domain is a subset of $\mathcal{Y}$, which implies the testing data do not contain new classes that are not in the training data.
The features are denoted by $\x\in\mathbb{R}^{p}$, and $\x$ could be high-dimensional vector.

Under the label shift context, $p_s(y|\x)$ and $p_t(y|\x)$ are generally different.
Thus, a model trained using labeled source data (i.e., $\widehat{p}_s(y|\x)$) is usually biased.
Using such a model for $p_t(y|\x)$ will decrease prediction accuracy.
If we have prior knowledge about $p_s(y)$ and $p_t(y)$, we can use the following Bayes formula to adjust the trained model for testing data:
\begin{align}\label{eq:bayes-formula}
&p_t(y=i|\x)=\frac{p_t(y=i,\x)}{p_t(\x)}=\frac{p_t(y=i,\x)}{\sum_{j=1}^{k}p_t(\x,y=j)}\notag\\
&=\frac{\dfrac{p_t(y=i)}{p_s(y=i)}p_s(y=i|\x)}{\sum_{j=1}^{k}\dfrac{p_t(y=j)}{p_s(y=j)}p_s(y=j|\x)},\quad i=1,\dots,k.
\end{align}
Equation~(\ref{eq:bayes-formula}) provides the insight on turning $[p_s(y=1|\x),\dots, p_s(y=k|\x)]$ into $[p_t(y=1|\x),\dots, p_t(y=k|\x)]$.
The formula hinges on the knowledge of the importance weights $\bo=(\omega_1,\dots,\omega_k)$, where $\omega_i\coloneqq p_t(y=i)/p_s(y=i)$.
Although we can estimate $p_s(y=i)$ $(i=1,\dots,k)$ empirically, data from the target distribution are unlabeled; thus, we cannot simply take the empirical distribution of $p_t(y)$ to estimate $\bo$ with the ratio $\widehat{p}_t(y=i)/\widehat{p}_s(y=i)$.
Hence, we can convert addressing the label shift into the estimation problem of $\bo$.

\subsection{Related Work and Existing Methods}

In the literature, the label shift adaptation has attracted increasing attention.
The well-known methods include the kernel-mean-matching approach \cite{zhang2013domain}, the generative adversarial training approach \cite{pmlr-v119-guo20d} and importance re-weighting approaches \cite{maity2020minimax, evans2021inference, roberts2022unsupervised}.
In this work, we follow the research track of using importance re-weighting to address the label shift problem. 
In the literature, there are two popular types of importance weight estimation methods:
One is based on inverting a confusion matrix, and the other is maximizing the likelihood function. In the following, we will briefly review the two methods.

The confusion matrix method is based on solving a linear equation.
Under the label shift assumption, it holds 
\begin{equation}\label{eq:bbse-eq}
\int\omega(y)p_s(\x,y)dy=p_t(\x),
\end{equation}
where $\omega(y)=p_t(y)/p_s(y)$.
The black box shift estimation (BBSE) method (see \citealt{lipton2018detecting}) replaces $\x$ with $\widehat{y}=f(\x)\in\left\{1,\dots,k\right\}$ in (\ref{eq:bbse-eq}), where $f(\x)=\mathrm{argmax}_{y}\widehat{p}_s(y|\x)$ and $\widehat{p}_s(y|\x)$ is a trained model.
Then (\ref{eq:bbse-eq}) becomes a linear system where we can solve the importance weights by inverting a confusion matrix.
\citet{azizzadenesheli2018regularized} proposed the regularized learning of label shift (RLLS) method.
The RLLS method added a regularized scheme to the BBSE method by penalizing on $\Vert\bo-\mathbf{1}\Vert$, but the essence of the RLLS method is the same as the BBSE, which is inverting a confusion matrix.

The other type of method is based on maximizing the likelihood function.
\citet{saerens2002adjusting} proposed the maximum likelihood label shift (MLLS) method by maximizing $\sum_{i=n+1}^{n+m}\log p_t(\x_t)$ with respect to $\bo$.
An EM algorithm was proposed to perform the maximum likelihood estimation.
However, the MLLS method needs the model $p_s(y|\x)$ to be correctly specified to ensure that the estimation is consistent.
\citet{alexandari2020} showed that the MLLS method performed poorly when $p_s(y|\x)$ was fitted using a vanilla neural network model.
\citet{alexandari2020} further discovered that one could greatly improve the estimation of the importance weights by calibrating the fitted predictive model $\widehat{p}_s(y|\x)$, but it remained unclear why calibration can help.
\citet{garg2020} provided theoretical justifications for calibrating the predictive model by showing maximizing likelihood is equivalent to minimizing a KL divergence if the predictive model is calibrated.
The calibrated MLLS method outperforms the confusion matrix methods (i.e., BBSE and RLLS) in estimation error.

\subsection{Contributions and Outlines}

The confusion matrix methods are easy to implement, but the performances are less satisfactory than the calibrated MLLS method.
But the calibration procedure is non-trivial, requiring further training and optimizations (e.g., \citealt{guo2017calibration}).
Furthermore, there is more than one calibration method, and the final results are sensitive to the choice of the method.

This paper aims to find a solution combining the advantages of both methods: achieving good performance while keeping the procedure simple.
We achieve this goal by proposing a moment-matching estimator.
The major contributions of our work are summarized as follows:
\begin{enumerate}
    \item We derive a novel moment-matching framework to address the label shift problem.
    The proposed moment-matching framework is based on the geometry of the influence function under a semiparametric model.
    Under the proposed moment-matching framework, we develop the estimator for the importance weight $\bo$.
    It has been shown that our proposed estimators are regular asymptotic linear (RAL) (see \citealt{tsiatis2006semiparametric}).
    \item We propose the efficient label shift adaptation (ELSA) method for the importance weight estimation.
    We can obtain the ELSA estimator by solving linear systems.
    We prove that the ELSA estimator is $\sqrt{n}$-consistent,
    We further show that the ELSA estimator satisfies the asymptotic normality. 
    \item  We conduct thorough numerical experiments to validate the performance of our proposed ELSA method. 
    The ELSA method outperforms the well-known methods: BBSE, RLLS, and MLLS.
    Furthermore, the ELSA method has competitive performance and is more computationally efficient than the calibrated MLLS method, which is the best method in the literature (see \citealt{alexandari2020}).
\end{enumerate}

The rest of the paper is organized as follows. In Section~\ref{sec:formulation}, we formulate the label shift problem semiparametrically and propose a moment-matching framework for estimating the importance weights.
Section~\ref{sec:proposed-estimator} proposes the ELSA estimator based on the aforementioned moment-matching framework; we further prove that the ELSA estimator is consistent and has an asymptotic normal distribution.
Section~\ref{sec:numerical-studies} uses numerical studies to compare the ELSA method with other existing methods.
Section~\ref{sec:discussion} concludes the paper with some discussions on future work.


\section{Semiparametric Moment-Matching Framework}
\label{sec:formulation}

This section proposes a novel moment-matching framework for addressing the label shift problem.
The framework is based on the geometry of influence functions from a semiparametric model.
More specifically, we need to derive the function space perpendicular to the nuisance tangent space.
We refer readers to Section~\ref{sup:pre} of the appendix for prerequisites and more details on the RAL estimator, influence function, and geometry of function spaces.

In the first subsection, we unify the source and target distributions using one imaginary pooled distribution and describe the data with a semiparametric likelihood function.
In the second subsection, we derive the perpendicular space under the semiparametric model.
We then propose a moment-matching framework for addressing the label shift from the perpendicular space.


\subsection{A Semiparametric Model}

We can write the importance weight as a function of $y$ as
\[
\omega(y;\bo)\coloneqq
\frac{p_t(y)}{p_s(y)}=\sum_{i=1}^{k}\omega_i\mathrm{I}(y=i),\quad y=1,\dots,k,
\]
where $\bo=(\omega_1,\dots,\omega_k)$.
In the rest of the paper, we write the importance weight function as $\omega(y)$ for short.
Under the unsupervised domain adaptation setting, the labeled data from the source domain are independent of the unlabeled target data.
The sample sizes of both data can be arbitrary and unrelated, and the two distributions are not identically distributed.
We introduce a binary variable $r$ so that the source and target distributions merge into one fictitious pooled distribution.
We have a random triplet $(\x,y,r)$ in this pooled distribution.
A sample with $r=1$ belongs to the source  domain and $p_s(\x,y)=p(\x,y|r=1)$ and $r=0$ verse visa.
Using this pooled distribution $p(\x,y,r)$, we can treat the pooled data as independent and identically distributed.

For any sample from the pooled distribution, its likelihood function is given by
\begin{equation}\label{eq:original-likelihood}
\mathcal{L}=\left\{\pi p_s(\x,y)\right\}^{r}\left\{(1-\pi)p_t(\x)\right\}^{1-r}\\
\end{equation}
where $\pi=\Pr(r=1)$.
We can decompose the likelihood in (\ref{eq:original-likelihood}) as
\begin{equation}\label{eq:likelihood-1}
\begin{aligned}
\mathcal{L}\!=\!\pi^r(1\!-\!\pi)^{1\!-\!r}p_s(\x)\{p_s(y|\x)\}^r\Bigg\{\!\!\!\!\int\omega(y)p_s(y|\x)dy\Bigg\}^{\!1\!-\!r}
\end{aligned}
\end{equation}
In (\ref{eq:likelihood-1}), $\bo$ are of interest while $p_s(y|\x)$, $p_s(\x)$, and $\pi$ are nuisances.
From a parametric perspective, one can estimate $\bo$ using the maximum likelihood estimator (MLE) if $p_s(y|\x)$ is correct.
As $p_s(\x)$ and $\pi$ are irrelevant with the MLE of $\bo$, if we were able to correctly specify $p_s(y|\x)$, the likelihood function could be reduced to $\mathcal{L}\propto\left\{\int\omega(y)p_s(y|\x)dy\right\}^{1-r}$.
This is exactly the method proposed in \citet{saerens2002adjusting}.
However, The success of this method hinges on whether one can correctly specify $p_s(y|\x)$.
If the $p_s(y|\x)$ is misspecified, the MLE of $\bo$ may not be consistent anymore.


To avoid such an issue, we can view the likelihood (\ref{eq:original-likelihood}) from a semiparametric perspective, where $p_s(y|\x)$ is no longer required to be a correctly specified parametric model.
We reformulate the likelihood (\ref{eq:original-likelihood}) as
\begin{align}\label{eq:semi-likelihood}
    \mathcal{L}=\pi^{r}(1-\pi)^{1-r}&\{p_s(y)\}^r
    \{p(\x| y)\}^{r} \notag\\
   &\cdot\Bigg\{\int\omega(y)p(\x| y)p_s(y)dy\Bigg\}^{1-r}\!\!\!\!\!\!.
\end{align}
We treat $p(\x|y)$, $p_s(y)$, and $\pi$ in (\ref{eq:semi-likelihood}) as nuisances.
Note that we do {\em not} assume any parametric models for $p(\x|y)$ and $p_s(y)$, and thus they are allowed to be nonparametric functions from the infinite-dimensional space.
Although we rewrite the likelihood function using $p(\boldsymbol{x}|y)$ in (5), we do not intend to estimate $p(\boldsymbol{x}|y)$. Here $p(\boldsymbol{x}|y)$ is solely an intermediate component for deriving the perpendicular space (as detailed in Section~\ref{sup:pre} of the appendix).
The reason for choosing $p(\boldsymbol{x}|y)$ is that $p(\boldsymbol{x}|y)$ is the same in both the source and target distributions so that we do not need to differentiate $p_s(y|\boldsymbol{x})$ and $p_t(y|\boldsymbol{x})$.

It is infeasible to compute the MLE of $\bo$ in (\ref{eq:semi-likelihood}) when we assume a nonparametric (i.g., infinite-dimensional) model for $p(\x|y)$.
Instead, our solution is to geometrically blend the parameter of interest with other nuisances so that we can develop {\em robust} estimation methods even if the model $p(\x|y)$ is misspecified with respect to nuisances.
To be more specific, we first compute the nuisance tangent spaces and find the function space perpendicular to all the nuisance tangent spaces.
The influence functions of estimators lay in the perpendicular space.
In addition to Section~\ref{sup:pre} of the appendix, we refer to \citet{bickel1993efficient} and \citet{tsiatis2006semiparametric} for more details.




Based on the above semiparametric likelihood function in (\ref{eq:semi-likelihood}), we will derive the nuisance tangent space and the perpendicular space in Section~\ref{Sec. function_spaces_and_matching}, which is essential for developing our label shift adaptation estimator.

\subsection{Function Spaces and Moment Matching}\label{Sec. function_spaces_and_matching}

Our main utilization of semiparametric models is to derive the complement of the nuisance tangent space (i.e., the perpendicular space) $\Lambda^{\perp}$. Based on the semiparametric theory \citep{bickel1993efficient,tsiatis2006semiparametric}, this space corresponds to the influence functions for estimating the parameter of interest $\boldsymbol{\omega}$. In other words, every element in $\Lambda^{\perp}$ corresponds to a RAL estimator of $\boldsymbol{\omega}$. Also, this space indicates any function that is not in this space should not be used for estimating $\boldsymbol{\omega}$ in the interest of efficiency. For example, if $\boldsymbol{\phi}$ is an function that $\boldsymbol{\phi}\not\in\Lambda^{\perp}$, then one should not use $\boldsymbol{\phi}$ but to use $\Pi(\boldsymbol{\phi}|\Lambda^{\perp})$ instead. Here
$$
\boldsymbol{\phi}=\underbrace{\boldsymbol{\phi}-\Pi(\boldsymbol{\phi}|\Lambda^{\perp})}_{\in\Lambda}\oplus\underbrace{\Pi(\boldsymbol{\phi}|\Lambda^{\perp})}_{\in\Lambda^{\perp}},
$$
and $\Pi(\boldsymbol{\phi}|\Lambda^{\perp})$ is the projection of $\boldsymbol{\phi}$ onto the space $\Lambda^{\perp}$. This is because by using the projection $\Pi(\boldsymbol{\phi}|\Lambda^{\perp})$, we can improve the efficiency (i.e., empirically, decrease the MSE of the estimator). Also, if $\boldsymbol{\phi}\not\in\Lambda^{\perp}$, we cannot obtain a RAL estimator, thus, making it difficult to characterize the resulting estimator.

Our ultimate goal is to find a RAL estimator for $\bo$, which would enjoy the consistency and asymptotic normality properties (see Section~\ref{sup:pre} of the appendix).
Achieving this goal is equivalent to finding its corresponding influence function.
Because an influence function lays in a function space perpendicular to the nuisance tangent spaces, our goal becomes to derive the perpendicular space, which is the main goal of Theorem~\ref{th:perpendicualr}.
\begin{theorem}\label{th:perpendicualr}
Without a loss of generality, we fix $w_k$ as $w_k=(1-\sum_{i=1}^{k-1}w_ip_i)/p_k$, and $p_k$ as $p_k=1-\sum_{i=1}^{k-1}p_i$, where $p_i\coloneqq \Pr{}_{\!s}(y=i)$.
The log-likelihood function is given by $r\log(\pi)+(1-r)\log(1-\pi)+\sum_{i=1}^{k-1}r\mathrm{I}(y=i)\left\{\log p(\x| y=i)+\log(p_i)\right\}+r\mathrm{I}(y=k)\left\{\log p(\x| y=k)+\log(1-p_1-\dots-p_{k-1})\right\}+(1-r)\log\{p(\x| y=1) w_1p_1+p(\x| y=2)w_2p_2+\dots+p(\x| y=k)(1-w_1p_1-\dots-w_{k-1}p_{k-1})\}$.
The nuisances are $p_1,\dots,p_{k-1}$, $p(\x| y=1)$,\dots, $p(\x| y=k)$, and $\pi$.
Then, the perpendicular space $\Lambda^{\perp}$, which is orthogonal to all the nuisance tangent spaces, is given by
\begin{align}\label{eq:perpendicular}
&\Lambda^{\perp}=
\Bigg[
\frac{r}{\pi}\Bigg\{1-\sum_{i=1}^k \omega_i \mathrm{I}(y=i)\Bigg\} E(\mathbf{h} | y=k)
\!+\!\Bigg\{\!\frac{r}{\pi} \notag\\ 
\!\times\! & \sum_{i=1}^k \omega_i \mathrm{I}(y\!=\!i)
-\!\frac{1\!-\!r}{1\!-\!\pi}\Bigg\} \mathbf{h}(\mathbf{x})\!:\!E_t(\mathbf{h})\!=\!\0\in\mathbb{R}^{k-1}
\Bigg].
\end{align}
\end{theorem}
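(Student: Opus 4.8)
The plan is to follow the standard semiparametric recipe (cf.\ \citet{tsiatis2006semiparametric}): first compute every nuisance score (the generators of the nuisance tangent spaces), then characterize $\Lambda^{\perp}$ as the set of mean-zero, $(k-1)$-dimensional functions orthogonal to all of them, and finally solve the resulting orthogonality equations to read off the explicit form. Throughout I write the target marginal as $q(\x)=\int\omega(y)p(\x| y)p_s(y)dy$, abbreviate $\omega(y)=\sum_i\omega_i\mathrm{I}(y=i)$, and split a candidate influence function $\boldsymbol{\phi}$ into its source block $\boldsymbol{\phi}_1(\x,y)$ (its value on $r=1$) and its target block $\boldsymbol{\phi}_0(\x)$ (its value on $r=0$); the latter cannot depend on $y$ since $y$ is unobserved when $r=0$.

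First I would differentiate the log-likelihood to obtain the finite-dimensional scores $S_\pi=r/\pi-(1-r)/(1-\pi)$ and, for $i=1,\dots,k-1$, $S_{p_i}=r\mathrm{I}(y=i)/p_i-r\mathrm{I}(y=k)/p_k+(1-r)\omega_i\{p(\x| y=i)-p(\x| y=k)\}/q(\x)$, together with the infinite-dimensional nuisance tangent spaces obtained by perturbing each conditional $p(\x| y=j)$. The key structural feature of label shift is that each $p(\x| y=j)$ enters \emph{both} the source block (through $r\mathrm{I}(y=j)$) and the target mixture $q$; a one-dimensional perturbation along a direction $a_j$ with $E\{a_j(\x)\mid y=j\}=0$ therefore produces the score $B_j=r\mathrm{I}(y=j)a_j(\x)+(1-r)\omega_jp_jp(\x| y=j)a_j(\x)/q(\x)$.

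The heart of the argument is imposing orthogonality to these conditional-density scores. Writing out $E[\boldsymbol{\phi}\,B_j]=\0$ and using that $a_j$ ranges over all mean-zero functions (under $p(\cdot\mid y=j)$), the standard ``orthogonal to every mean-zero direction implies constant'' argument forces the pointwise identity $\pi\boldsymbol{\phi}_1(\x,j)+(1-\pi)\omega_j\boldsymbol{\phi}_0(\x)=\c_j$ for some constant vector $\c_j$, for each $j=1,\dots,k$ (the common factor $p_j>0$ cancels). Setting $\h(\x)\coloneqq-(1-\pi)\boldsymbol{\phi}_0(\x)$, this solves the source block as $\boldsymbol{\phi}_1(\x,j)=\c_j/\pi+\omega_j\h(\x)/\pi$, reducing the problem to determining the $k$ constants $\c_j$ and the free function $\h$. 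I expect this step---correctly bookkeeping the two appearances of each conditional and turning the variational condition into the pointwise constraint---to be the main obstacle; everything afterward is linear algebra.

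It then remains to pin down $\c_j$ and the admissible $\h$ using the remaining conditions. Imposing $E[\boldsymbol{\phi}]=\0$ gives $\sum_j p_j\c_j=\0$ (after noting $\sum_j\omega_jp_jE(\h| y=j)=E_t(\h)$), and combining this with orthogonality to $S_\pi$ collapses to $E_t(\h)=\0$, which is exactly the stated constraint. Orthogonality to each $S_{p_i}$ yields $\c_i-\c_k=(\omega_k-\omega_i)E(\h| y=k)$; feeding these into $\sum_jp_j\c_j=\0$ and using $\sum_jp_j\omega_j=1$ forces $\c_j=(1-\omega_j)E(\h| y=k)$ for every $j$. Substituting $\c_j$ and $\boldsymbol{\phi}_0=-\h/(1-\pi)$ back into the two blocks reproduces the displayed formula~\eqref{eq:perpendicular}. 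I would finish by verifying the converse inclusion---that every $\h$ with $E_t(\h)=\0$ indeed yields a function orthogonal to all three families of nuisance scores ($\pi$, the $p_i$'s, and the conditionals)---so that the characterization is exact rather than merely necessary.
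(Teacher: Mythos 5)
Your proposal is correct and takes essentially the same approach as the paper's own proof: identical nuisance scores, the same key step of converting orthogonality to the conditional-density perturbations into a pointwise identity (your $\pi\boldsymbol{\phi}_1(\x,j)+(1-\pi)\omega_j\boldsymbol{\phi}_0(\x)=\c_j$ is the paper's Equation~(\ref{eq:perp2}) after dividing by $p_j$, with $\f_j=\boldsymbol{\phi}_1(\cdot,j)$ and $\h=\boldsymbol{\phi}_0$), and the same linear algebra pinning down the constants via orthogonality to the $p_j$-scores and the mean-zero condition. If anything, your write-up is slightly more complete: you explicitly derive the constraint $E_t(\h)=\0$ from orthogonality to $S_\pi$ combined with $E\{\boldsymbol{\phi}\}=\0$, and you flag the need to verify the converse inclusion, both of which the paper's proof leaves implicit.
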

Letting $\varphi(\x,y,r;\bo)$ be an influence function, it satisfies $E\left\{\varphi(\x,y,r;\bo)\right\}=\0$ at $\bo=\bo_0$.
Because there is a one-to-one relationship between influence function and element in (\ref{eq:perpendicular}): $\varphi(\x,y,r;\bo)=\mathbf{C}\mathbf{g}(\x,y,r;\bo)$, where $\mathbf{C}$ is an invertible constant matrix and $\mathbf{g}\in\Lambda^{\perp}$.
We have
\begin{equation}\label{eq:influ-perp}
E\left\{\varphi(\x,y,r;\bo)\right\}=\0\Leftrightarrow E\left\{\mathbf{g}(\x,y,r;\bo)\right\}=\0.
\end{equation}
Equation~(\ref{eq:influ-perp}) implies that solving the empirical version of $E\left\{\varphi(\x,y,r;\bo)\right\}=\0$ is equivalent to solving the empirical version of $E\left\{\mathbf{g}(\x,y,r;\bo)\right\}=\0$.
So, we can solve $\sum_{i=1}^{n+m}\mathbf{g}(\x_i,y_i,r_i;{\bo})=\0$ to estimate $\bo$.
The estimating equation $\sum_{i=1}^{n+m}\mathbf{g}(\x_i,y_i,r_i;{\bo})=\0$ can be further written as
\begin{align}\label{eq:empirical-estimating-equation}
&\sum_{j=1}^{n+m}\Bigg[
\frac{r_j}{\pi}\Bigg\{1-\sum_{i=1}^k{\omega}_i \mathrm{I}(y_j=i)\Bigg\} E\{\mathbf{h}(\x) | y=k\}\notag\\
&+\Bigg\{\frac{r_j}{\pi} \sum_{i=1}^k{\omega}_i \mathrm{I}(y_j=i)-\frac{1-r_j}{1-\pi}\Bigg\} \mathbf{h}({\x}_j)
\Bigg]=\0.
\end{align}
To solve the equation, we can split (\ref{eq:empirical-estimating-equation}) into two parts:
\begin{equation}
\label{eq:two-equations}
\begin{cases}
\sum_{j=1}^{n+m}\frac{r_j}{\pi}\left\{1\!-\!\sum_{i=1}^{k}{\omega}_i\mathrm{I}(y_j\!=\!i)\right\}E\left\{\h(\x)|y\!=\!k\right\}\!=\!\0,\\
\sum_{j=1}^{n+m}\left\{\frac{r_j}{\pi}\sum_{i=1}^{k}{\omega}_i\mathrm{I}(y_j\!=\!i)\!-\!\frac{1-r_j}{1\!-\!\pi}\right\}\h(\x_j)\!=\!\0.
\end{cases}
\end{equation}
If both equations hold, then Equation~(\ref{eq:empirical-estimating-equation}) holds.

Because $E\left\{\h(\x)|y=k\right\}$ is a constant, the first equation of (\ref{eq:two-equations}) is equivalent to $\sum_{j=1}^{n}\left\{1-\sum_{i=1}^{k}\omega_i\mathrm{I}(y_j=i)\right\}=\0$.
This equation holds because we put the a constraint on $\widehat{\bo}_i$: $\sum_{i=1}^{k}\widehat{\omega}_i\widehat{p}_i=1$, where $\widehat{p}_i$ is the proportion of observations with label $y=i$ in the source dataset.
So, we only need to achieve the second equation of (\ref{eq:two-equations}), which directly leads to the moment-matching framework described below.
\begin{definition}{(\textbf{\textit{Moment Matching Framework}})}
We call the following equation the moment-matching framework for label shift adaptation:
\begin{equation}\label{eq:empirical-moment-matching}
\frac{1}{n}\sum_{j=1}^{n}\sum_{i=1}^{k}{\omega}_i\mathrm{I}(y_j=i)\h(\x_j)=\frac{1}{m}\sum_{j=n+1}^{n+m}\h(\x_j).
\end{equation}
Equation~\eqref{eq:empirical-moment-matching} is the empirical version of $E_s\left\{\omega(y)\h(\x)\right\}=E_t\left\{\h(\x)\right\}$.
Here $\h(\x)\in\mathbb{R}^{k-1}$ is a function that satisfies $E_t(\h)=\0$.
\end{definition}
In equation~(\ref{eq:empirical-moment-matching}), one first finds a mapping $\x\mapsto\mathbf{h}(\x)\in\mathbb{R}^{k-1}$, and then estimates the importance weights (i.e., $\bo$) by matching the first moment of $\mathbf{h}(\x)$ on the target domain (i.e., $E_t\left\{\mathbf{h}(\x)\right\}$) with a shifted first moment (i.e., $E_s\left\{\omega(y)\mathbf{h}(\x)\right\}$).

In the perpendicular space (\ref{eq:perpendicular}), we require $\mathbf{h}(\x)$ to satisfy $E_t\left\{\mathbf{h}(\x)\right\}=\0$.
However, we do not need this constraint when choosing $\mathbf{h}(\x)$: for any $\mathbf{h}(\x)\in\mathbb{R}^{k-1}$, we can use $\mathbf{h}^\ast(\x)=\mathbf{h}(\x)-E_t\left\{\mathbf{h}(\x)\right\}$ to construct an element $\mathbf{g}(\x,y,r)\in\mathbf{\Lambda}^{\perp}$ as $E_t\left\{\mathbf{h}^\ast(\x)\right\}=\0$.
Because we are subtracting a constant on both sides of (\ref{eq:empirical-moment-matching}), using $\h^\ast(\x)$ yields the same estimator as $\h(\x)$.

\begin{remark}
Different choices of $\h(\x)$ lead to different estimators, while a poor choice (e.g., $\h(\x)$ is degenerate) may fail to yield a valid estimator.
It is worth noting that if we let $\mathbf{h}_{\mathrm{BBSE}}(\x)=[{p}_s(y=1|\x),\cdots,{p}_s(y=k-1|\x)]^\top$, the resulting estimator $\widehat{\bo}$ is essentially equivalent to the BBSE-soft (see \citealt{lipton2018detecting}) method when we replace $p_s(y|\x)$ with the estimated $\widehat{p}_s(y|\x)$ (but subject to the constraint that $\sum_{i=1}^{k}\widehat{p}_i\widehat{\omega}_i=1$).
\end{remark}


\section{Efficient Label Shift Adaptation}
\label{sec:proposed-estimator}
The BBSE method is easy to implement as the core step is only inverting a matrix.
But the performance of the BBSE method is not as good as the MLLS method proposed in \citet{alexandari2020}.
However, the MLLS method requires an extra step of calibrating the predictive model.
There are many calibration procedures available, and it is difficult to choose the calibration procedure when different calibration methods give different results.
Moreover, implementing the calibrated MLLS method is more difficult than the BBSE method.

The question is: Can we find an alternative method based on the perpendicular space (\ref{eq:perpendicular}) so that it can achieve good performances and is still easy to implement and computationally efficient?
We answer this question by proposing the ELSA estimator, which does rely on calibration and is computationally simple and efficient.

\subsection{The Estimation Method}

We propose $\mathbf{h}_{\mathrm{ELSA}}(\x)$  by
\begin{equation}\label{eq:hp}
\mathbf{h}_{\mathrm{ELSA}}(\x)=\left\{\frac{E_s\left\{\omega^2(y) | \x\right\}}{\pi}+\frac{E_s\{\omega(y) | \x\}}{1-\pi}\right\}^{-1}\widetilde{\boldsymbol{\mu}}(\x) ,
\end{equation}
where $\widetilde{\boldsymbol{\mu}}(\x)\coloneqq[p_s(y=1|\x)-p_s(y=k|\x),\dots,p_s(y=k-1|\x)-p_s(y=k|\x)]^{T}$.
The proposed $\mathbf{h}_{\mathrm{ELSA}}(\x)$  is inspired by projecting the score function $\mathbf{S}_{\bo}\coloneqq\partial\log\{\mathcal{L}(\x,y,r)\}/\partial\bo^{(-k)}$ onto the perpendicular space in (\ref{eq:perpendicular}).
The projection has a form $\mathbf{c}(\x)/[{E_s\left\{\omega^2(y) | \x\right\}}/{\pi}+{E_s\{\omega(y) | \x\}}/{(1-\pi)}]$, where $\c(x)$ is a function depends on $\x$ with complex structure.
We replace $\mathbf{c}(\x)$ with the $\widetilde{\boldsymbol{\mu}}(\x)$ for simple implementation.
More details on the motivation of $\mathbf{h}_{\mathrm{ELSA}}(\x)$ are given in Section~\ref{appendx-motivation} the appendix.


In practice, we can adopt a fitted model $\widehat{p}_s(y=j|\x)$ to replace $p_s(y=j|\x)$.
Then we have $\widehat{E}_s\{\omega^2(y)|\x\}=\sum_{i=1}^{k}\omega_i^2 \widehat{p}_s(y=i|\x)$, $\widehat{E}_s\{\omega(y)|\x\}=\sum_{i=1}^{k}\omega_i\widehat{p}_s(y=i|\x)$.
Also we have $\omega_k=(1-\sum_{i=1}^{k-1}\widehat{p}_i\omega_i)/\widehat{p}_k$, where $\widehat{p}_i$ ($i=1,\dots,k$) is the proportion of source data with label $y=i$.
Directly from (\ref{eq:empirical-moment-matching}), we can estimate $\bo$ by solving the following equation:
\begin{align}\label{eq:empirical-equation}
\frac{1}{n}\Big\{\!\mathbf{H}(\bo)\mathbf{A}\bo^{(-k)}\!+\!\mathbf{H}(\bo)\mathbf{v}\!\Big\}
\!\!-\!\frac{1}{m}\mathbf{H}^\ast(\bo)\mathbf{1}_{m\times1}\!=\!\0,
\end{align}
where $\bo^{(-k)}\coloneqq(\omega_1,\dots,\omega_{k-1})$,
$\mathbf{H}(\bo)=[\widehat{\mathbf{h}}_{\mathrm{ELSA}}(\x_1),\dots,\widehat{\mathbf{h}}_{\mathrm{ELSA}}(\x_n)]$,
$\mathbf{H}^\ast(\bo)=[\widehat{\mathbf{h}}_{\mathrm{ELSA}}(\x_{n+1}),\dots,\widehat{\mathbf{h}}_{\mathrm{ELSA}}(\x_{n+m})]$, $\mathbf{A}$ is a $n\times(k-1)$ matrix and the $j$th row of $\mathbf{A}$ corresponds to the $j$th observation in the source data and is defined as
\[
\mathbf{A}_{j\cdot}=
\begin{cases}
\left[1,0,\dots,0\right],\quad y_j=1,\\
\left[0,1,\dots,0\right],\quad y_j=2,\\
\ldots\\
\left[0,\dots,0,1\right],\quad y_j=k-1,\\
\left[-\widehat{p}_1/\widehat{p}_k,\dots, -\widehat{p}_{k-1}/\widehat{p}_k\right],\quad y_j=k,
\end{cases}
\]
and $\mathbf{v}$ is a vector of length $n$ whose $j$th entry $v_j=1/\widehat{p}_k$ if $y_j=k$, otherwise $v_j=0$ for $j=1,\dots,n$.
Here $\widehat{\mathbf{h}}_{\mathrm{ELSA}}(\x)$ is obtained by replacing $p_s(y|\x)$ in $\mathrm{h}_{\mathrm{ELSA}}(\x)$ with a fitted $\widehat{p}_s(y|\x)$.

\begin{remark}
We can find the solution $\hat{\bo}^{(-k)}$ to (\ref{eq:empirical-equation}) by minimizing the $\ell_2$-norm of the left side of (\ref{eq:empirical-equation}):
\begin{align}\label{Eq: min_prob}
\Big\|\frac{1}{n}\Big\{\!\mathbf{H}(\bo)\mathbf{A}\bo^{(-k)}\!+\!\mathbf{H}(\bo)\mathbf{v}\!\Big\}
\!\!-\!\frac{1}{m}\mathbf{H}^\ast(\bo)\mathbf{1}_{m\times1}\Big\|^2.
\end{align}
The problem~\eqref{Eq: min_prob} can be solved with any optimizer.
Another approach is to find the solution $\bo$ by the fix-point iteration: Given $\bo^{(-k)}_{t}=(\omega_1,\dots,\omega_{k-1})_{t}$ at the $t$-th iteration, we can update it by
$\bo^{(-k)}_{t+1}=\left\{\mathbf{H}(\bo_{t})\mathbf{A}\right\}^{-1}\left\{({n}/{m})\mathbf{H}^\ast(\bo_{t})\mathbf{1}_{m\times1}-\mathbf{H}(\bo_{t})\mathbf{v}
\right\}$. 
This method only involves matrix inversion and we can obtain the solution to (\ref{eq:empirical-equation}) when $\{\bo^{(-k)}_t\}$ converges.
\end{remark}

\subsection{Theoretical Results}

In this part, we will establish the theoretical propriety of our ELSA estimator.
It can be shown that our estimator $\widehat{\bo}^{(-k)}$ (or equivalently $\widehat{\bo}$) is a RAL estimator and thus enjoy the asymptotic normality.
We state the details in the following theorem.
The proofs are given in Section~\ref{sup:proofs} of the appendix.
\begin{theorem}\label{th:asymp}
Letting
\begin{align*}
    \mathbf{g}_p(\x,y,r)=\frac{r}{\pi}\left\{1-\sum_{i=1}^k \omega_i \mathrm{I}(y=i)\right\} E(\mathbf{h}_{\mathrm{ELSA}} | y=k)\notag\\
    +\left\{\frac{r}{\pi} \sum_{i=1}^k \omega_i \mathrm{I}(y=i)-\frac{1-r}{1-\pi}\right\} \mathbf{h}_{\mathrm{ELSA}}(\mathbf{x}),
\end{align*}
where $\mathbf{h}_{\mathrm{ELSA}}(\x)$ is defined in (\ref{eq:hp}).
Under the assumption that $E\bigg\{{\partial\mathbf{g}_p}/{{\partial\bo^{(-k)}}^{\mathrm{T}}}\bigg\}\big|_{\bo^{(-k)}=\bo^{(-k)}_0}$ is non-singular and $n/(n+m)\to\pi\in(0,1)$ as $n\to\infty$, the proposed estimator $\widehat{\bo}^{(-k)}$, as the solution to (\ref{eq:empirical-equation}), enjoys the following proprieties:
\begin{enumerate}
\item ($\sqrt{n}$-consistent) $\widehat{\bo}^{(-k)}={\bo}^{(-k)}_0+o_p({1}/{\sqrt{n}})$, i.e. for any small $\epsilon>0$, we can find a value $N$ such that for all $n>N$, $\Pr(\sqrt{n}\Vert\widehat{\bo}^{(-k)}-\bo_0^{(-k)}\Vert>\epsilon)<\epsilon$.
\item (asymptotic normality) as $n\to\infty$, it holds that
$\sqrt{n}\left(\widehat{\bo}^{(-k)}-\bo_0^{(-k)}\right)\xrightarrow{d}
\mathcal{N}\left(\mathbf{0}, \pi\mathbf{U}\mathbf{V}\mathbf{U}^\top\right)$.
\end{enumerate}
where $\mathbf{U}=\bigg[E\bigg\{{\partial\mathbf{g}_p}/{{\partial\bo^{(-k)}}^{\mathrm{T}}}\bigg\}\big|_{\bo^{(-k)}=\bo^{(-k)}_0}\bigg]^{-1}$ and $\mathbf{V} =E\{\mathbf{g}_p(\x,y,r;\bo_0^{(-k)})\mathbf{g}_p^{\mathrm{T}}(\x,y,r;\bo_0^{(-k)})\}$.
\end{theorem}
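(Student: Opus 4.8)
The plan is to treat $\widehat{\bo}^{(-k)}$ as a Z-estimator (estimating-equation estimator) and apply the standard linearization argument from M-estimation theory, with the semiparametric geometry of Theorem~\ref{th:perpendicualr} supplying the one ingredient special to this problem. Write $\mathbf{G}_n(\bo^{(-k)})\coloneqq (n+m)^{-1}\sum_{j=1}^{n+m}\mathbf{g}_p(\x_j,y_j,r_j;\bo^{(-k)})$, so that $\widehat{\bo}^{(-k)}$ solves $\mathbf{G}_n(\widehat{\bo}^{(-k)})=\0$; recall that under the pooled distribution the triplets $(\x_j,y_j,r_j)$ are i.i.d. The foundational step is to verify that the estimating function is unbiased at the truth, i.e.\ $E\{\mathbf{g}_p(\x,y,r;\bo_0^{(-k)})\}=\0$. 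This is exactly the content of \eqref{eq:influ-perp} together with Theorem~\ref{th:perpendicualr}: since $\mathbf{g}_p\in\Lambda^{\perp}$ is built from $\mathbf{h}_{\mathrm{ELSA}}$, its population mean reduces to the moment-matching identity $E_s\{\omega(y)\mathbf{h}_{\mathrm{ELSA}}(\x)\}=E_t\{\mathbf{h}_{\mathrm{ELSA}}(\x)\}$, which holds precisely at $\bo=\bo_0$.

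First I would establish consistency $\widehat{\bo}^{(-k)}\xrightarrow{p}\bo_0^{(-k)}$. Over a compact neighborhood of $\bo_0^{(-k)}$, a uniform law of large numbers gives $\sup_{\bo}\|\mathbf{G}_n(\bo^{(-k)})-\mathbf{G}(\bo^{(-k)})\|\xrightarrow{p}0$, where $\mathbf{G}(\bo^{(-k)})\coloneqq E\{\mathbf{g}_p\}$; the assumed non-singularity of the Jacobian $E\{\partial\mathbf{g}_p/\partial{\bo^{(-k)}}\trans\}|_{\bo_0^{(-k)}}=\mathbf{U}^{-1}$ makes $\bo_0^{(-k)}$ a locally isolated root of $\mathbf{G}$, so the root $\widehat{\bo}^{(-k)}$ of $\mathbf{G}_n$ must converge to it. I would then linearize: applying the mean value theorem coordinatewise (or its integral-remainder form) there is an intermediate $\bar{\bo}$ between $\widehat{\bo}^{(-k)}$ and $\bo_0^{(-k)}$ with
\[
\0=\mathbf{G}_n(\bo_0^{(-k)})+\mathbf{J}_n(\bar{\bo})\big(\widehat{\bo}^{(-k)}-\bo_0^{(-k)}\big),\qquad \mathbf{J}_n(\bar{\bo})\coloneqq\frac{\partial\mathbf{G}_n}{\partial{\bo^{(-k)}}\trans}\bigg|_{\bar{\bo}}.
\]

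The two probabilistic limits then come from the two halves of this identity. For the Jacobian, consistency gives $\bar{\bo}\xrightarrow{p}\bo_0^{(-k)}$, and a law of large numbers (using continuity of $\partial\mathbf{g}_p/\partial{\bo^{(-k)}}\trans$ in $\bo$ and an integrable envelope) yields $\mathbf{J}_n(\bar{\bo})\xrightarrow{p}\mathbf{U}^{-1}$, invertible by assumption. For the leading term, the multivariate central limit theorem applied to the mean-zero i.i.d.\ summands $\mathbf{g}_p(\x_j,y_j,r_j;\bo_0^{(-k)})$ with covariance $\mathbf{V}$ gives $\sqrt{n}\,\mathbf{G}_n(\bo_0^{(-k)})\xrightarrow{d}\mathcal{N}(\0,\pi\mathbf{V})$; the factor $\pi$ arises because the sum runs over $n+m$ terms while we rescale by $\sqrt{n}$, so $\operatorname{Var}(\sqrt{n}\,\mathbf{G}_n)=\{n/(n+m)\}\mathbf{V}\to\pi\mathbf{V}$ under $n/(n+m)\to\pi$. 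Solving the linearized identity, $\sqrt{n}(\widehat{\bo}^{(-k)}-\bo_0^{(-k)})=-\mathbf{J}_n(\bar{\bo})^{-1}\sqrt{n}\,\mathbf{G}_n(\bo_0^{(-k)})$, and invoking Slutsky's theorem produces the claimed limit $\mathcal{N}(\0,\pi\mathbf{U}\mathbf{V}\mathbf{U}\trans)$; the $O_p(1)$ bound on the right-hand side simultaneously delivers the $\sqrt{n}$-consistency of part~1.

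The main obstacle is that the estimator actually solves \eqref{eq:empirical-equation}, in which $\mathbf{g}_p$ is evaluated not at the true nuisances but at the plug-in quantities $\widehat{p}_s(y\mid\x)$, $\widehat{\pi}=n/(n+m)$, and the estimated conditional expectations $E(\mathbf{h}_{\mathrm{ELSA}}\mid y=k)$ and $E_t(\mathbf{h}_{\mathrm{ELSA}})$; I must show these estimation errors do not perturb the first-order expansion. This is precisely where membership $\mathbf{g}_p\in\Lambda^{\perp}$ does the real work: because $\mathbf{g}_p$ is orthogonal to the entire nuisance tangent space, the Gateaux derivative of $\bo^{(-k)}\mapsto E\{\mathbf{g}_p\}$ with respect to perturbations of the nuisances vanishes (Neyman orthogonality), so substituting consistent nuisance estimates contributes only a second-order remainder. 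Making this rigorous---bounding that remainder via an empirical-process / second-order Taylor argument and verifying the requisite smoothness, envelope, and stochastic-equicontinuity conditions behind the uniform LLN and the CLT---is the delicate part; the clean algebraic steps above all presuppose that this robustness has been established.
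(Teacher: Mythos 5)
Your proposal is correct and takes essentially the same route as the paper's own proof: a standard Z-estimator linearization (Taylor/mean-value expansion of the estimating equation at $\bo_0^{(-k)}$, a law of large numbers for the Jacobian combined with the non-singularity assumption, a CLT for the mean-zero summands $\mathbf{g}_p(\x,y,r;\bo_0^{(-k)})$, and Slutsky), with the factor $\pi$ arising from $n/(n+m)\to\pi$ exactly as in the paper, which obtains it by rescaling $\sqrt{n+m}(\widehat{\bo}^{(-k)}-\bo_0^{(-k)})$ rather than by your equivalent direct variance computation for $\sqrt{n}\,\mathbf{G}_n(\bo_0^{(-k)})$. If anything you are more careful than the paper in two places it silently glosses over: you establish consistency before linearizing instead of assuming the expansion is valid, and you flag that the estimator actually solves the equation with plug-in nuisances ($\widehat{p}_s(y\mid\x)$, $\widehat{\pi}$), so that a Neyman-orthogonality/second-order-remainder argument is still needed---an issue the paper's proof does not address at all.
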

\begin{remark}
Note that the estimator $\widehat{\bo}^{(-k)}$ does not include $\omega_k$.
As $n\to\infty$, $\widehat{p}_j$ $(j=1,\dots,k)$ also converge to the true values $p_j$ $(j=1,\dots,k)$.
The estimator for the importance weight for class $y=k$ (i.e., $\widehat{\omega}_k$), which is given by $\widehat{\omega}_k=(1-\sum_{i=1}^{k-1}\widehat{p}_i\widehat{\omega}_i)/\widehat{p}_k$ is also consistent by the Slutsky's theorem.
\end{remark}
\begin{remark}
    The proposed estimator belongs to the family of the Z-estimator, and the conditions in Theorem 3.2 are standard regularity assumptions for the Z-estimator. More details on the Z-estimator and its regularity assumptions can be found in Chapter 5 in \citet{vaart_1998}.
\end{remark}

\section{Numerical Studies}\label{sec:numerical-studies}

\begin{figure*}[t!]
\centering
\begin{tabular}{@{}ccc@{}}
\includegraphics[width=0.31\linewidth]{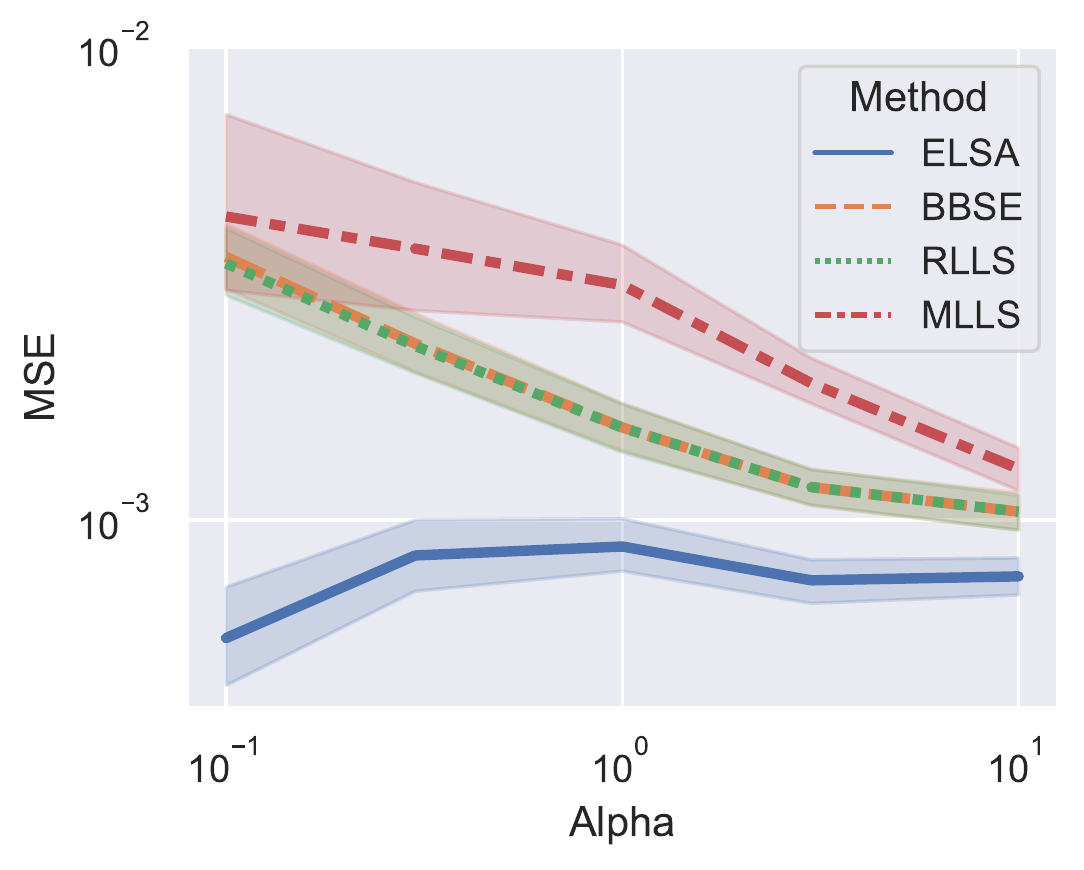}&
\includegraphics[width=0.31\linewidth]{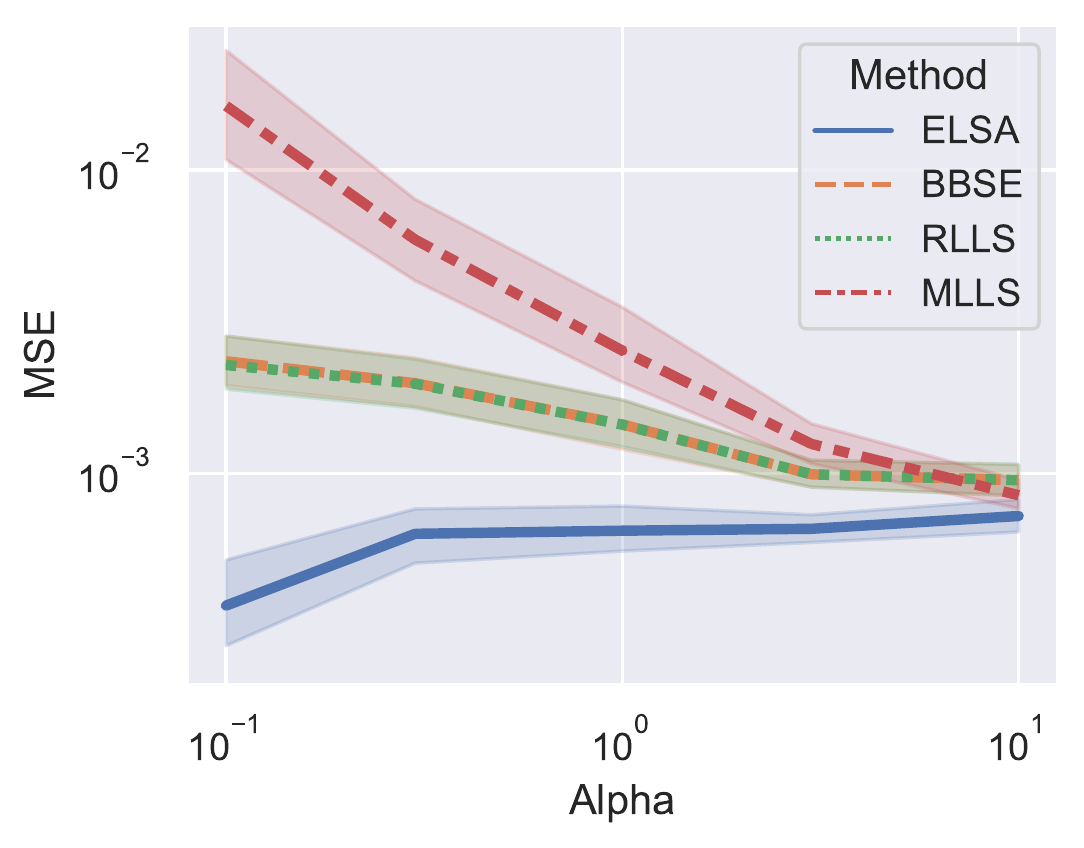}&
\includegraphics[width=0.31\linewidth]{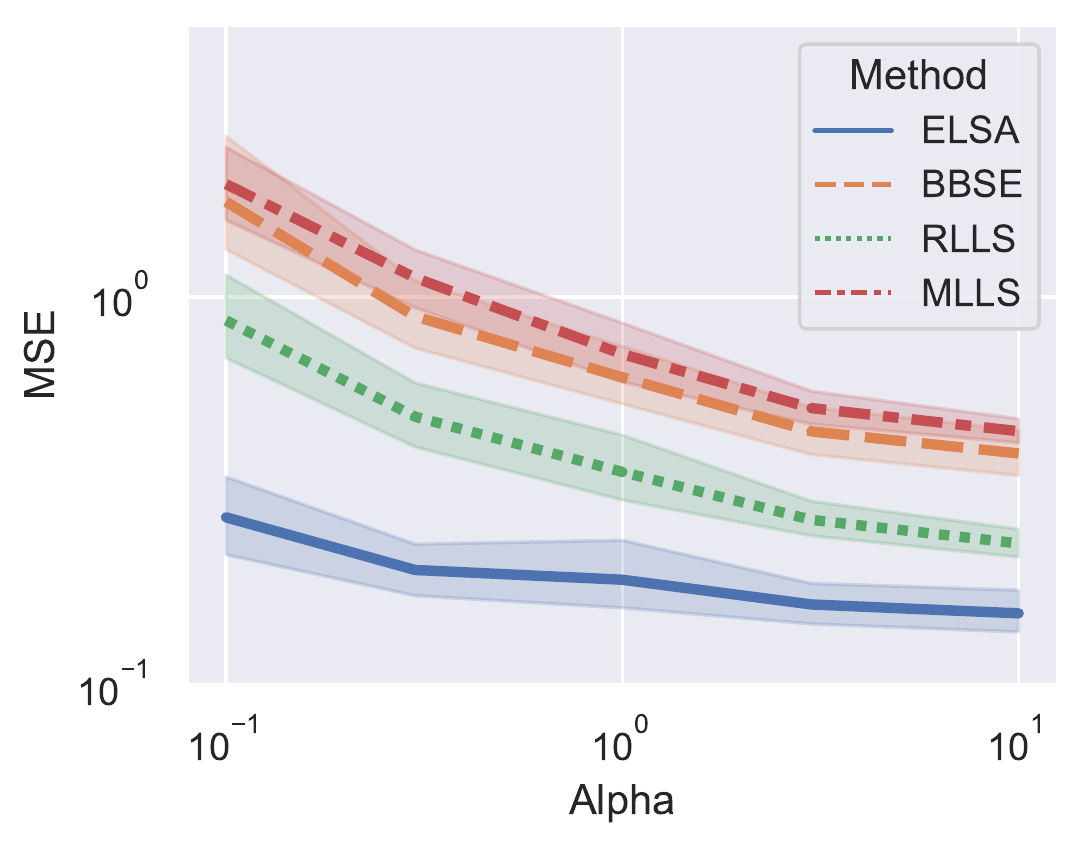}\\
~~~~(a) MNIST & ~~~~(b) CIFAR-10 & ~~~~(c) CIFAR-100\\
\end{tabular}
\caption{MSE v.s. the degree of shift. We control the label shift by adjusting the Dirichlet shift parameter $\alpha$. We fix the sample size as $5000$. The solid curves represent the mean trimmed $5\%$ and the shadow regions are $95\%$ CI error band.}\label{fig.MSE_over_SampleSize}
\end{figure*}

\begin{figure*}[t!]
\centering
\begin{tabular}{@{}ccc@{}}
\includegraphics[width=0.31\linewidth]{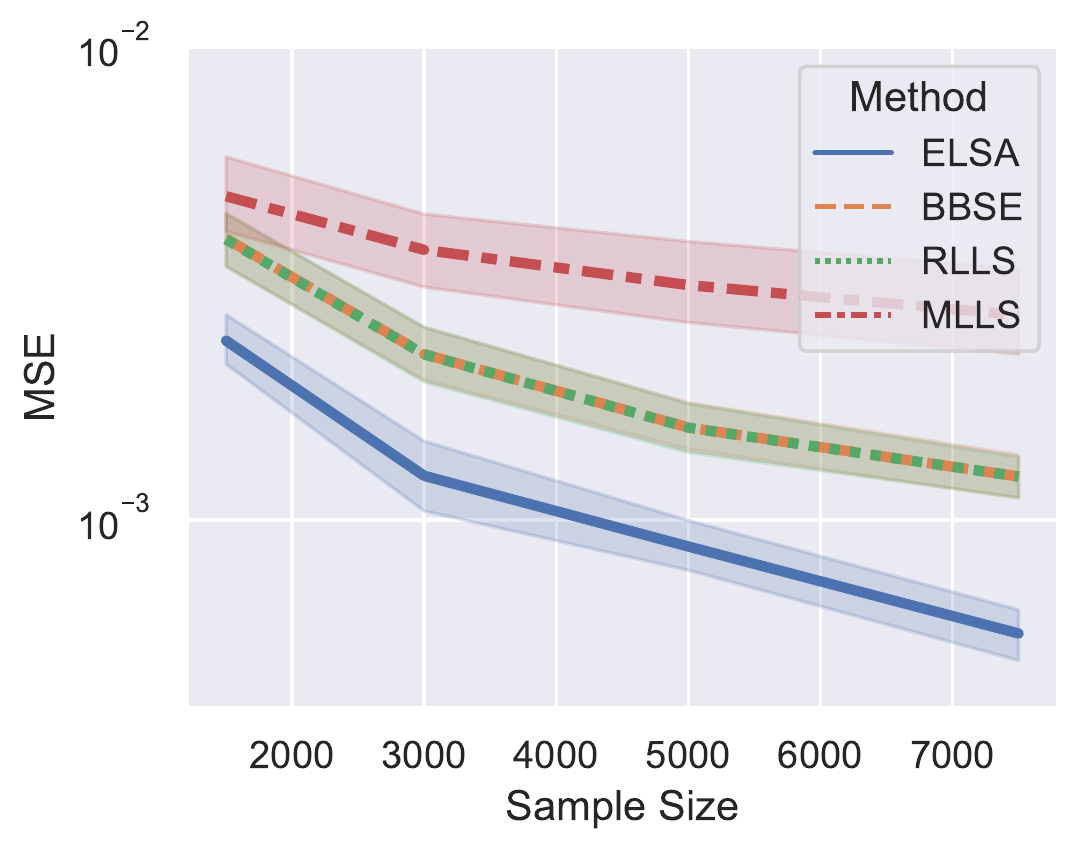}&
\includegraphics[width=0.31\linewidth]{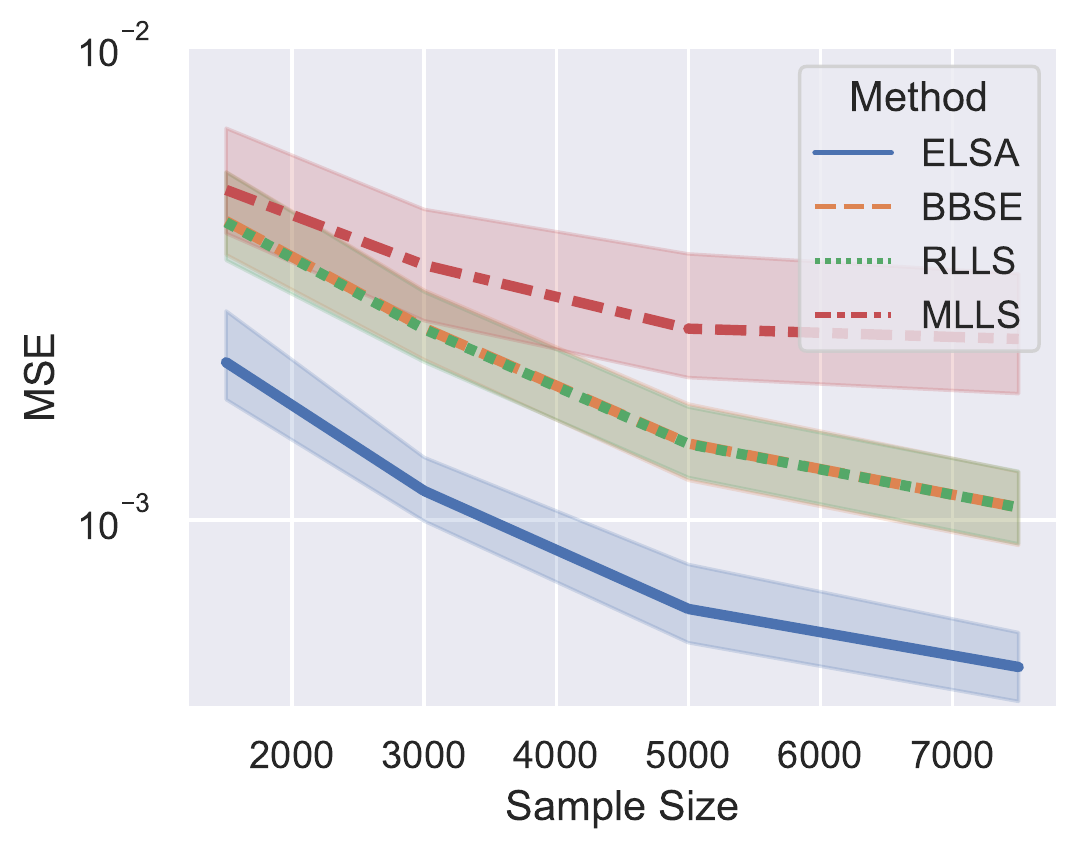}&
\includegraphics[width=0.31\linewidth]{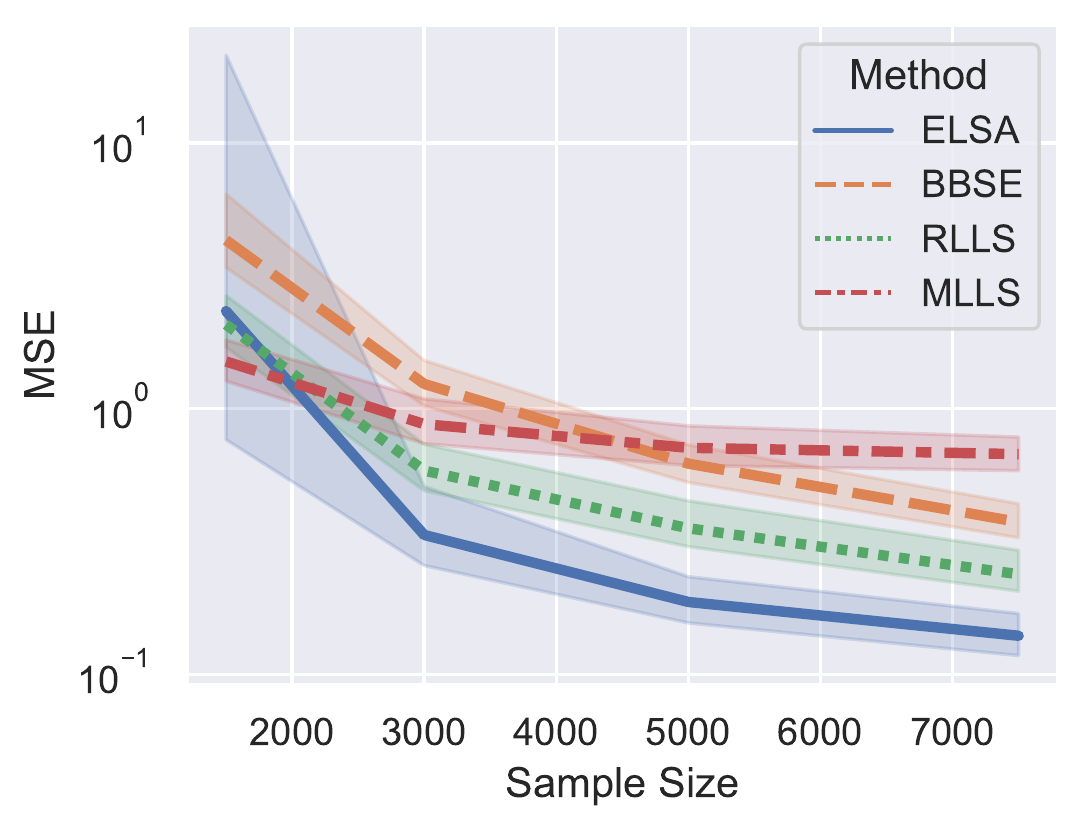}\\
~~~~(a) MNIST & ~~~~(b) CIFAR-10 & ~~~~(c) CIFAR-100\\
\end{tabular}
\caption{MSE v.s. the sample size. We fix the Dirichlet shift parameter $\alpha$ as $1.0$ for the three datasets. The solid curves represent the mean trimmed $5\%$, and the shadow regions are $95\%$ CI error band.}\label{fig.MSE_over_Alpha}
\end{figure*}

\begin{figure*}[t!]
\centering
\includegraphics[width=1.0\linewidth]{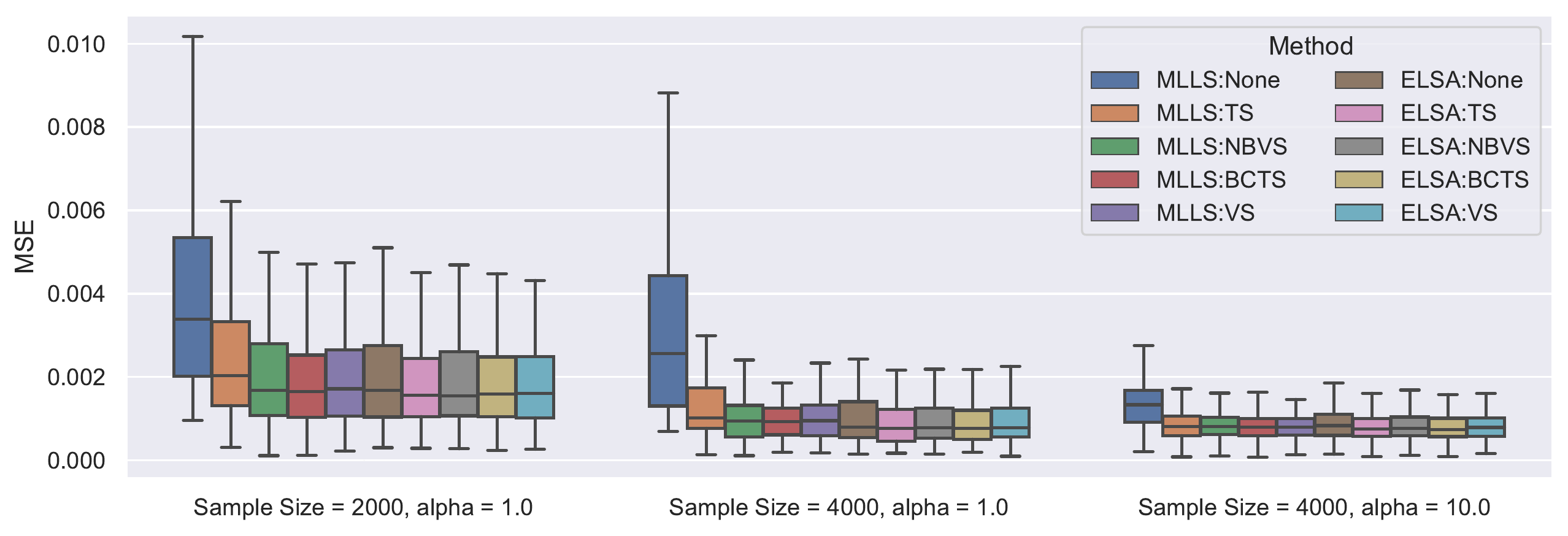}\\
(a) MNIST \\
\includegraphics[width=1.0\linewidth]{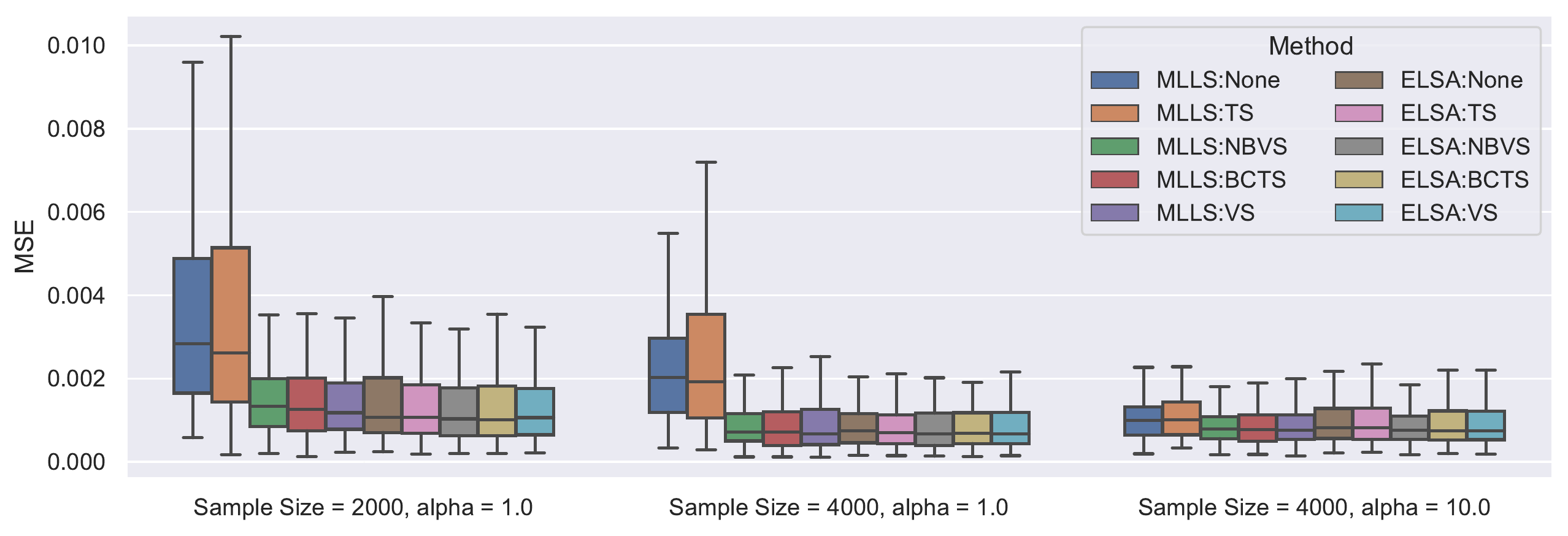}\\
(a) CIFAR-10 \\
\caption{The MSE boxplots under calibrations for ELSA and MLLS methods. }\label{fig.MSE_over_Cali}
\end{figure*}

In this section, we conduct numerical experiments to demonstrate the efficacy of our proposed method on the label shift problem. We summarize our experimental settings in the following.

\textbf{Datasets and Models}. 
Our experiments are evaluated on MNIST, CIFAR-10, and CIFAR-100. 
We adopt the same settings in \cite{alexandari2020}: 
For each dataset, ten models are trained with different random seeds, and 10k data samples of the training set are reserved as the source dataset (so that it is not used for training the model.)
The models for MNIST have the same architectures as the ones in \cite{azizzadenesheli2018regularized} and those for CIFAR-10 and CIFAR-100 are the same as the models in \cite{geifman2017selective}.

\textbf{Label Shift Mechanism}. 
Dirichlet label shift is adopted as our label shift mechanism.
More specifically, we use a Dirichlet distribution with concentration parameter $\alpha$ to generate the label distribution $p(y)$ for the target dataset.
Then we create the target dataset by sampling with replacement according to $p(y)$. The label shift is more severe with a smaller $\alpha$. We will call $\alpha$ as the Dirichlet shift parameter in the following studies. 

\textbf{Baseline Methods and Metric}. We compare our proposed method with three baseline methods: BBSE \cite{lipton2018detecting}, RLLS \cite{azizzadenesheli2018regularized} and the maximum likelihood method via an EM algorithm \cite{alexandari2020,garg2020} (we call it the MLLS method in the rest of the paper). Furthermore, we consider boosting the performance of the MLLS method by adopting prediction calibrations. To evaluate the performance, we adopt the mean squared error (MSE) between the true weights and the estimated weights as the metric. In our experiments, for all the settings, we run 20 trials for each model and, thus, 200 replications in total. 

Due to the space limitation, in this section, we will focus on the following three questions:
\vspace{-.15in}
\begin{enumerate} 
    \item How's the performance of our ELSA compared with the baseline methods under different scenarios?
    \vspace{-.1in}
    \item Would ELSA require calibrations?
    \vspace{-.1in}
    \item How much computation efficiency does ELSA gain?
\end{enumerate}
\vspace{-.15in}
More experiments and detailed results are relegated to Section~\ref{supp:simu} of the appendix.

\subsection{ELSA Performs {\em Better} than Baseline Approaches}

Our first experimental study focuses on the efficacy of our proposed method under different sample sizes and shift parameters. We mainly compare our proposed method with the standard baseline methods, BBSE, RLLS, and MLLS. In all the experiments, we keep the source and target datasets with the same sample size and generate the two sets by sampling with replacement.

Figure~\ref{fig.MSE_over_SampleSize} illustrates the estimation results of our proposed method on the three datasets under the different Dirichlet shift parameters. We fix the sample size as $n=m=5000$ while adjust $\alpha$ from $\{0.1, 0.3, 1.0, 3.0, 10.0\}$. The figures show that our method has the most accurate estimation in terms of the MSE. 
For example, in the case of MNIST with $\alpha=1.0$, our method reduces the MSE from RLLS's $1.567\times 10^{-3}$ to $8.78\times10^{-4}$, which is almost $50\%$ improvement.
Also, the MSE trends of our method are relatively flat over different $\alpha$: The MSE values are around $10^{-3}$ for MNIST and CIFAR-10, and about $10^{-1}$ for CIFAR-100. However, the MSE gets larger for the other methods while $\alpha$ is closer to zero. 

We also compare our method with the others on different sample sizes. We fix $\alpha=1.0$ and allow the sample size $n=m$ ranging from $\{1500, 3000, 5000, 7500\}$. The results are summarized in Figure~\ref{fig.MSE_over_Alpha}.
Our method consistently has a lower MSE than the other three methods on MNIST and CIFAR-10. For CIFAR-100, the estimation performance of our method on $1500$ samples is similar to the competitive methods. But as the sample size increases, our method performs better than the others.

\subsection{Calibration is {\em Not} Necessary for ELSA}

As it is studied in \cite{alexandari2020, garg2020}, post-prediction calibration is an essential step for the success of the MLLS method.
In this study, we want to examine whether calibration is necessary for our method.
Here we use calibration methods from temperature scaling (TS), no-bias vector scaling (NBVS), bias-corrected temperature scaling (BCTS), and vector scaling (VS). We refer the readers to \cite{alexandari2020} for details of the calibration methods. Also, we consider a none calibration case in which only soft-max operation is adopted to map the prediction score into $[0,1]$.

We study the combination of our proposed and the MLLS methods over the five calibration methods. 
Figure~\ref{fig.MSE_over_Cali} shows the corresponding estimation performance under the datasets of MNIST and CIFAR-10. In the experiments, we consider the sample size ($n=m$) and $\alpha$ pairs in $(2000, 1.0)$, $(4000, 1.0)$, and $(4000, 10.0)$.
It can be seen that the MLLS method with the none calibration has the poorest estimation result.
By performing the proper calibration, the MLLS method's estimation could improve.
For example, in the case of CIFAR-10 with sample size $n=m=4000$ and $\alpha=1.0$, the median MSE of the MLLS method with the none calibration is $2.024\times 10^{-3}$ while reduced to about $7 \times 10^{-4}$ by adding the BCTS calibration.
However, the performance of our method is very stable under different calibrations and data settings.
Even with the none calibration, our method has a competitive estimation as the best result from the MLLS method.
Thus, we conclude that the post-prediction calibration is unnecessary for our proposed method.

\subsection{ELSA Achieves the High Accuracy {\em Fast}}

\begin{figure}[t!]
\centering
\includegraphics[width=0.95\linewidth]{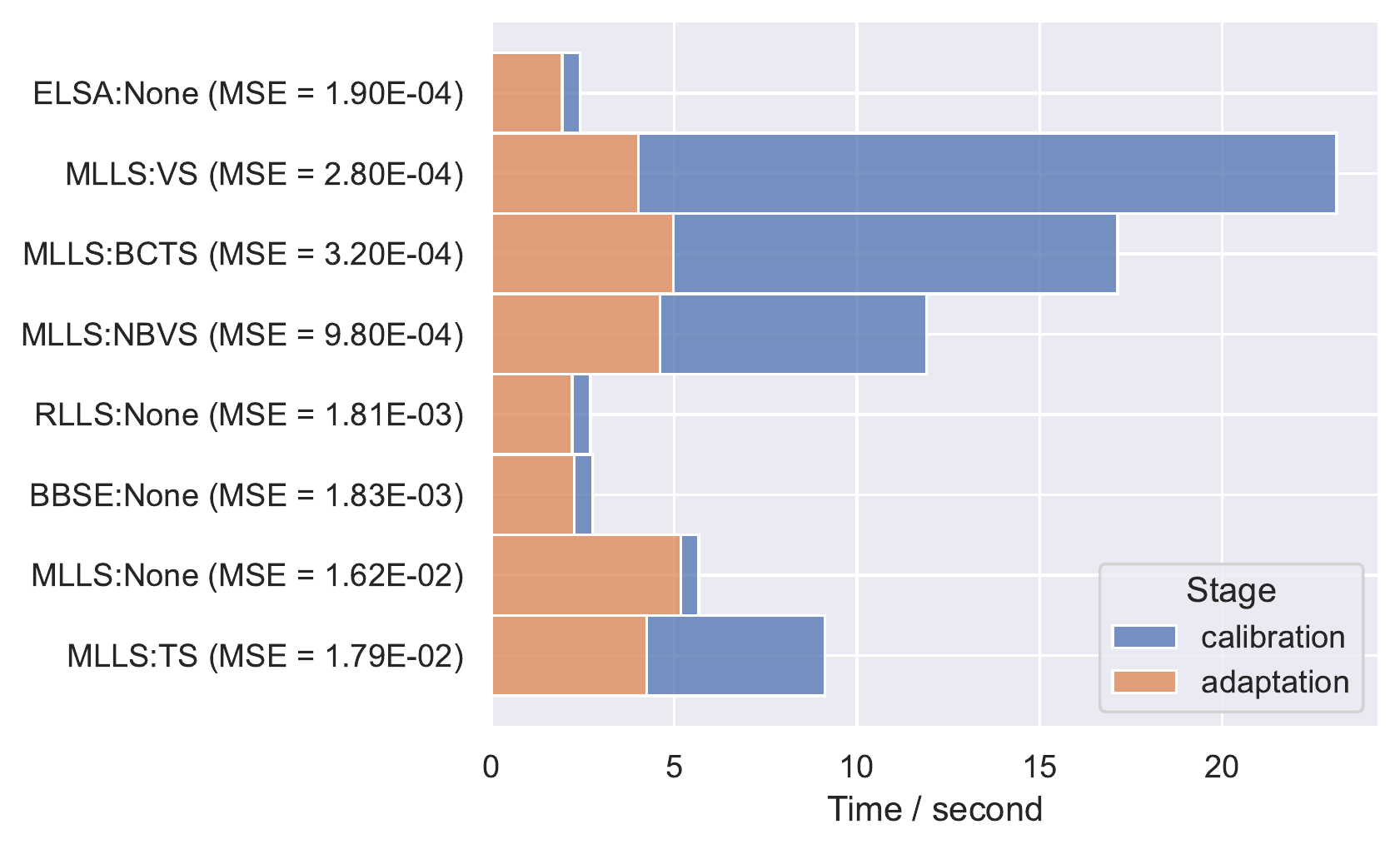}
\vspace{-.2in}
\caption{The stacked barplot for the computation times of different methods on CIFAR-10. The Dirichlet shift parameter $\alpha$ and sample size are fixed as 0.1 and 10000, respectively. The MSE values are the mean trimmed $5\%$.}
\label{fig.Method_over_Time}
\end{figure}

In the end, we aim to examine the computation efficiency of our method.
The experiments are conducted on a MacBook Pro with a 2.9 GHz Dual-Core Intel Core i5 processor and 8 GB memory.
We compare our method with the three baseline methods, BBSE, RLLS, and MLLS, without calibration. Additionally, for the MLLS method, we also consider its combinations with TS, VS, BCTS, and NBVS calibrations.
We study the case with a dataset as CIFAR-10, sample size $m=n=10000$, and $\alpha=0.1$.

Figure~\ref{fig.Method_over_Time} illustrates the trade-off between computation and estimation over different methods.
We rank the methods based on their trimmed mean MSE after removing $5\%$ outliers, and the upper method is with smaller MSE.
We used the stacked barplot to show the accumulated computation time, separated into the stages of post-prediction calibration and label shift adaptation. 
Note that for all the methods with none calibration, the time at the calibration stage is for the data reformatting and the soft-max operation.
It can be seen that the top 3 methods (our ELSA, the MLLS method with VS, and BCTS calibrations) have the MSE values reduced by about $70\%$ compared with the 4th method (the MLLS method with NBVS calibration). 
Within the top 3 methods, our method uses the shortest time (less than 2.5 seconds) to finish the whole process, while the MLLS method with VS calibration costs over 20 seconds and with BCTS costs about 10 seconds.
Thus, our method could achieve high accuracy with the fastest computation.


\section{Discussion}\label{sec:discussion}

We propose a moment-matching framework for adapting the label shift for constructing an asymptotic linear estimator.
Furthermore, we propose the ELSA estimator by using a novel mapping for $\x$ (i.e., $\mathbf{h}_p(\x)$) in the moment-matching procedure; the resulting estimator has a competitive performance even without calibrating the predictive models.
The proposed method is also simple to implement and computationally efficient.
So, it is especially useful when calibration is difficult (e.g., different calibration methods do not agree) or one needs to frequently adapt to label shift.
For future research, one research question would be to find the optimal mapping function for $\x$.
Other research questions include extending the semiparametric framework to a more general label shift setting like the open set label shift (see \citealt{garg2022domain}).

\bibliography{refPARLABELSHIFT}
\bibliographystyle{icml2023}

\newpage
\appendix
\onecolumn
\section*{Appendix}

\section{Background}
\label{sup:pre}
Here we briefly introduce the asymptotically linear estimator and the influence function.
Suppose we have independent and identically distributed data $\mathbf{Z}=\left\{Z_1,\dots,Z_n\right\}$, and the goal is to estimate $\bb\in\mathbb{R}^{q}$.
If there exists a $q$-dimensional random vector $\varphi^{q\times1}(Z)$ such that
\begin{align}
n^{1/2}\left(\widehat{\bb}_n-\bb_0\right)=n^{-1/2}\sum_{i=1}^{n}\varphi(Z_i)+o_p(1),
\end{align}
then we call $\widehat{\bb}_n$, as a function $\mathbf{Z}$, is an asymptotic linear estimator, and we say $\varphi(Z)$ is the \textit{influence function} of $\widehat{\bb}_n$.
For example, if $\bb=E(Z)$, then the sample mean $\widehat{\bb}_n=\sum_{i=1}^{n}Z_i/n$ is an asymptotic linear estimator with influence function $\varphi(Z)=Z-\bb$.
For an asymptotically linear estimator, using the central limit theorem, we have
$n^{1/2}\left(\widehat{\bb}_n-\bb_0\right)\xrightarrow{d}\mathcal{N}(\textbf{0},E(\varphi\varphi^T))$, given that $E(\varphi\varphi^T)$ is finite nonsingular.
Generally, there is a one-to-one relationship between the influence function and the corresponding asymptotically linear estimator.
Given an influence function, we can obtain the corresponding estimator by solving 
\begin{equation}\label{eq:estimating-equation}
\sum_{i=1}^{n}\varphi(z_i;\widehat{\bb}_n)=\0,
\end{equation}
which is the empirical version of $E\left\{\varphi(Z;\bb)\right\}=\0$.
So, we can construct an asymptotic linear estimator by finding an influence function.

To find an influence function, we can start with the Hilbert space of the mean-zero functions of the same length of the parameter $\bb$.
In other words, the whole space of all possible functions.
But we want to further narrow down from the whole space to a subspace that contains desirable candidates.
To achieve this goal, we need to partition the whole space using the \textit{nuisance tangent space}.

Here we discuss the notation of nuisance tangent space non-technically.
The likelihood function can be decomposed into two parts: for one part, we use parametric models; for the other part, we do not assume parametric models (i.e., we use non-parametric models).
For each nonparametric model, we can derive an individual nuisance tangent space; thus, after deriving all the individual nuisance tangent spaces for each nonparametric model, we combine them together to form a nuisance tangent space for the semiparametric model.
The nuisance tangent space is a subspace of the whole Hilbert space.

After deriving the nuisance tangent space, an estimator of the parameters (that belong to the parametric models) correlates with the nuisance tangent space in the following manner: To be an influence function, a function needs to be orthogonal to the {nuisance tangent space}.
Thus, the \textit{perpendicular space} to the nuisance tangent space contains all the influence functions.
Under some regularity conditions, we can choose an element from the perpendicular space, which can serve as the influence function.

In our strategy to address the label shift problem, we aim to find an asymptotic linear estimator for the importance weight $\bo=(\omega_1,\dots,\omega_k)$.
To achieve this goal, we first derive the nuisance tangent space.
Then, we derive the perpendicular space, which is orthogonal to the nuisance tangent space.
We will see that the perpendicular space leads to a moment-matching framework for addressing the label shift.
The basis for all the aforementioned derivations is the semiparametric likelihood, which is the focus of the next subsection.

\begin{table}[ht]
    \centering
        \caption{Method comparison on MNIST with Dirichlet shift. The value is the mean MSE trimmed $5\%$ extreme values. The bold values under the same setting (column) are the top-3 results.} 
    \label{table.MINIST_Dirichlet}
\begin{tabular}{llllllll}
\toprule
 \multirow{2}*{Calibration}  & \multirow{2}*{Adaptation}  & \multicolumn{6}{c}{Setting: (Sample Size $n$, Shift Parameter $\alpha$)} \\
  \cline{3-8}
&  &          (500, 0.1) &          (500, 1.0) &         (1500, 0.1) &         (1500, 1.0) &         (4500, 0.1) &         (4500, 1.0) \\
\midrule
 \multirow{6}*{None} & BBSE-hard &           2.048E-02 &           1.260E-02 &           7.301E-03 &           3.937E-03 &           3.841E-03 &           1.661E-03 \\
   & RLLS-hard &           1.886E-02 &           1.250E-02 &           6.932E-03 &           3.921E-03 &           3.721E-03 &           1.659E-03 \\
   & BBSE-soft &           1.744E-02 &           1.050E-02 &           6.997E-03 &           3.178E-03 &           3.748E-03 &           1.334E-03 \\
   & RLLS-soft &           1.618E-02 &           1.039E-02 &           6.749E-03 &           3.178E-03 &           3.588E-03 &           1.335E-03 \\
   & MLLS &           2.312E-02 &           1.167E-02 &           9.177E-03 &           4.839E-03 &           4.661E-03 &           3.156E-03 \\
   & ELSA &           3.588E-03 &           7.474E-03 &           1.610E-03 &           2.394E-03 &           6.020E-04 &           9.317E-04 \\
\midrule
 \multirow{4}*{BCTS} & BBSE-soft &           1.733E-02 &           1.041E-02 &           6.636E-03 &           3.113E-03 &           3.352E-03 &           1.264E-03 \\
   & RLLS-soft &           1.653E-02 &           1.033E-02 &           6.379E-03 &           3.113E-03 &           3.237E-03 &           1.265E-03 \\
   & MLLS &  \textbf{3.556E-03} &  \textbf{6.290E-03} &  \textbf{1.456E-03} &  \textbf{2.235E-03} &           7.500E-04 &           1.016E-03 \\
   & ELSA &           3.612E-03 &           7.284E-03 &           1.674E-03 &           2.342E-03 &  \textbf{5.876E-04} &  \textbf{8.419E-04} \\
\midrule
 \multirow{4}*{NBVS} & BBSE-soft &           1.761E-02 &           1.055E-02 &           6.833E-03 &           3.177E-03 &           3.395E-03 &           1.286E-03 \\
   & RLLS-soft &           1.686E-02 &           1.047E-02 &           6.587E-03 &           3.178E-03 &           3.277E-03 &           1.287E-03 \\
   & MLLS &           7.056E-03 &           7.197E-03 &           2.428E-03 &           2.459E-03 &           1.473E-03 &           1.043E-03 \\
   & ELSA &  \textbf{3.587E-03} &           7.438E-03 &           1.619E-03 &           2.408E-03 &  \textbf{5.917E-04} &  \textbf{8.676E-04} \\
\midrule
 \multirow{4}*{TS} & BBSE-soft &           1.638E-02 &           1.010E-02 &           6.439E-03 &           3.037E-03 &           3.389E-03 &           1.263E-03 \\
   & RLLS-soft &           1.520E-02 &           1.001E-02 &           6.240E-03 &           3.034E-03 &           3.270E-03 &           1.263E-03 \\
   & MLLS &           2.199E-02 &           9.362E-03 &           8.106E-03 &           2.894E-03 &           3.823E-03 &           1.347E-03 \\
   & ELSA &  \textbf{3.519E-03} &  \textbf{7.000E-03} &           1.635E-03 &  \textbf{2.292E-03} &  \textbf{5.919E-04} &  \textbf{8.503E-04} \\
\midrule
 \multirow{4}*{VS}  & BBSE-soft &           1.869E-02 &           1.044E-02 &           6.756E-03 &           3.141E-03 &           3.351E-03 &           1.296E-03 \\
   & RLLS-soft &           1.784E-02 &           1.037E-02 &           6.530E-03 &           3.141E-03 &           3.251E-03 &           1.296E-03 \\
   & MLLS &           4.639E-03 &  \textbf{6.672E-03} &  \textbf{1.535E-03} &  \textbf{2.339E-03} &           7.481E-04 &           1.030E-03 \\
   & ELSA &           3.681E-03 &           7.143E-03 &  \textbf{1.583E-03} &           2.364E-03 &           6.520E-04 &           8.682E-04 \\
\bottomrule
\end{tabular}
\end{table}

\begin{table}[ht]
    \centering
        \caption{Prediction improvement comparison on MNIST with Dirichlet shift. The value is the mean delta accuracy trimmed $5\%$ extreme values. The bold values under the same setting (column) are the top-3 results.} 
    \label{table.MINIST_Dirichlet_acc}
\begin{tabular}{llllllll}
\toprule
 & Settings & (500, 0.1) & (500, 1.0) & (1500, 0.1) & (1500, 1.0) & (4500, 0.1) & (4500, 1.0) \\
\midrule
 \multirow{6}*{None} & BBSE-hard & 5.598e-02 & 1.064e-02 & 5.614e-02 & 1.064e-02 & 5.736e-02 & 1.130e-02 \\
 & RLLS-hard & 5.600e-02 & 1.073e-02 & 5.610e-02 & 1.064e-02 & 5.738e-02 & 1.130e-02 \\
 & BBSE-soft & 5.709e-02 & 1.169e-02 & 5.639e-02 & 1.139e-02 & 5.763e-02 & 1.176e-02 \\
 & RLLS-soft & 5.713e-02 & 1.164e-02 & 5.643e-02 & 1.139e-02 & 5.768e-02 & 1.175e-02 \\
 & MLLS & 5.840e-02 & 1.038e-02 & 5.636e-02 & 9.563e-03 & 5.750e-02 & 9.889e-03 \\
 & ELSA & \textbf{5.922e-02} & 1.207e-02 & 5.737e-02 & 1.134e-02 & 5.857e-02 & 1.166e-02 \\
\cline{1-8}
 \multirow{4}*{BCTS} & BBSE-soft & 5.467e-02 & 1.093e-02 & 5.552e-02 & 1.296e-02 & 5.750e-02 & 1.417e-02 \\
 & RLLS-soft & 5.480e-02 & 1.093e-02 & 5.558e-02 & 1.296e-02 & 5.757e-02 & 1.417e-02 \\
 & MLLS & 5.709e-02 & 1.200e-02 & 5.711e-02 & 1.312e-02 & 5.844e-02 & 1.409e-02 \\
 & ELSA & 5.802e-02 & 1.180e-02 & \textbf{5.794e-02} & 1.317e-02 & \textbf{5.931e-02} & \textbf{1.437e-02} \\
\cline{1-8}
 \multirow{4}*{NBVS} & BBSE-soft & 5.447e-02 & 1.158e-02 & 5.537e-02 & 1.309e-02 & 5.723e-02 & 1.417e-02 \\
 & RLLS-soft & 5.451e-02 & 1.156e-02 & 5.542e-02 & 1.309e-02 & 5.729e-02 & 1.417e-02 \\
 & MLLS & 5.753e-02 & 1.238e-02 & 5.659e-02 & 1.313e-02 & 5.825e-02 & 1.413e-02 \\
 & ELSA & \textbf{5.871e-02} & 1.249e-02 & \textbf{5.763e-02} & 1.332e-02 & \textbf{5.930e-02} & \textbf{1.434e-02} \\
\cline{1-8}
 \multirow{4}*{TS} & BBSE-soft & 5.607e-02 & 1.384e-02 & 5.611e-02 & \textbf{1.348e-02} & 5.732e-02 & 1.402e-02 \\
 & RLLS-soft & 5.611e-02 & \textbf{1.387e-02} & 5.621e-02 & 1.347e-02 & 5.734e-02 & 1.402e-02 \\
 & MLLS & 5.827e-02 & \textbf{1.422e-02} & 5.714e-02 & \textbf{1.352e-02} & 5.820e-02 & 1.405e-02 \\
 & ELSA & \textbf{5.947e-02} & \textbf{1.460e-02} & \textbf{5.789e-02} & \textbf{1.356e-02} & 5.924e-02 & 1.422e-02 \\
\cline{1-8}
 \multirow{4}*{VS} & BBSE-soft & 5.158e-02 & 9.067e-03 & 5.479e-02 & 1.230e-02 & 5.742e-02 & 1.407e-02 \\
 & RLLS-soft & 5.169e-02 & 9.067e-03 & 5.482e-02 & 1.230e-02 & 5.746e-02 & 1.407e-02 \\
 & MLLS & 5.467e-02 & 9.800e-03 & 5.613e-02 & 1.253e-02 & 5.827e-02 & 1.411e-02 \\
 & ELSA & 5.598e-02 & 1.004e-02 & 5.723e-02 & 1.265e-02 & \textbf{5.953e-02} & \textbf{1.430e-02} \\
\cline{1-8}
\bottomrule
\end{tabular}
\end{table}    

\begin{table}[ht]
    \centering
        \caption{Method comparison on CIFAR-10 with Dirichlet shift. The value is the mean MSE trimmed $5\%$ extreme values. The bold values under the same setting (column) are the top-3 results.}
    \label{table.CIFAR-10_Dirichlet}
    
    \begin{tabular}{llllllll}
\toprule
 \multirow{2}*{Calibration}  & \multirow{2}*{Adaptation}  & \multicolumn{6}{c}{Setting: (Sample Size $n$, Shift Parameter $\alpha$)} \\
  \cline{3-8}
   &  &          (500, 0.1) &          (500, 1.0) &         (1500, 0.1) &         (1500, 1.0) &         (4500, 0.1) &         (4500, 1.0) \\
\midrule
 \multirow{6}*{None} & BBSE-hard &           2.508E-02 &           1.197E-02 &           8.058E-03 &           4.288E-03 &           2.614E-03 &           1.559E-03 \\
   & RLLS-hard &           2.240E-02 &           1.175E-02 &           7.271E-03 &           4.261E-03 &           2.522E-03 &           1.557E-03 \\
   & BBSE-soft &           2.073E-02 &           8.784E-03 &           6.237E-03 &           3.490E-03 &           2.317E-03 &           1.198E-03 \\
   & RLLS-soft &           1.881E-02 &           8.690E-03 &           5.770E-03 &           3.465E-03 &           2.255E-03 &           1.194E-03 \\
   & MLLS &           3.490E-02 &           9.183E-03 &           2.342E-02 &           4.991E-03 &           1.673E-02 &           2.565E-03 \\
   & ELSA &  \textbf{2.790E-03} &           5.801E-03 &  \textbf{8.887E-04} &           2.154E-03 &           3.851E-04 &           7.279E-04 \\
\midrule
 \multirow{4}*{BCTS} & BBSE-soft &           2.083E-02 &           8.705E-03 &           6.369E-03 &           3.420E-03 &           2.247E-03 &           1.106E-03 \\
   & RLLS-soft &           1.885E-02 &           8.613E-03 &           6.027E-03 &           3.395E-03 &           2.148E-03 &           1.100E-03 \\
   & MLLS &  \textbf{2.941E-03} &  \textbf{5.116E-03} &           9.181E-04 &           2.133E-03 &           5.229E-04 &           7.503E-04 \\
   & ELSA &           3.190E-03 &  \textbf{5.655E-03} &           9.579E-04 &  \textbf{2.080E-03} &  \textbf{3.748E-04} &  \textbf{7.041E-04} \\
\midrule
 \multirow{4}*{NBVS} & BBSE-soft &           2.116E-02 &           8.849E-03 &           6.384E-03 &           3.396E-03 &           2.250E-03 &           1.120E-03 \\
   & RLLS-soft &           1.938E-02 &           8.762E-03 &           6.149E-03 &           3.373E-03 &           2.164E-03 &           1.115E-03 \\
   & MLLS &           6.038E-03 &  \textbf{5.722E-03} &           2.791E-03 &           2.310E-03 &           1.372E-03 &           8.431E-04 \\
   & ELSA &           3.075E-03 &           5.775E-03 &  \textbf{8.751E-04} &           2.116E-03 &  \textbf{3.657E-04} &  \textbf{7.004E-04} \\
\midrule
 \multirow{4}*{TS} & BBSE-soft &           2.060E-02 &           8.738E-03 &           6.385E-03 &           3.459E-03 &           2.336E-03 &           1.191E-03 \\
   & RLLS-soft &           1.833E-02 &           8.617E-03 &           5.906E-03 &           3.429E-03 &           2.263E-03 &           1.188E-03 \\
   & MLLS &           3.491E-02 &           9.200E-03 &           2.462E-02 &           5.252E-03 &           1.833E-02 &           2.684E-03 \\
   & ELSA &  \textbf{2.737E-03} &           5.742E-03 &  \textbf{8.904E-04} &  \textbf{2.101E-03} &           3.819E-04 &           7.167E-04 \\
\midrule
 \multirow{4}*{VS} & BBSE-soft &           2.092E-02 &           9.043E-03 &           6.375E-03 &           3.410E-03 &           2.252E-03 &           1.110E-03 \\
   & RLLS-soft &           1.910E-02 &           8.935E-03 &           6.028E-03 &           3.384E-03 &           2.162E-03 &           1.104E-03 \\
   & MLLS &           4.222E-03 &           5.835E-03 &           1.082E-03 &           2.205E-03 &           4.899E-04 &           7.624E-04 \\
   & ELSA &           3.873E-03 &           5.935E-03 &           9.983E-04 &  \textbf{2.089E-03} &  \textbf{3.664E-04} &  \textbf{7.050E-04} \\
\bottomrule
\end{tabular}
\end{table}

\begin{table}[ht]
    \centering
        \caption{Prediction improvement comparison on CIFAR-10 with Dirichlet shift. The value is the mean delta accuracy trimmed $5\%$ extreme values. The bold values under the same setting (column) are the top-3 results.} 
    \label{table.CIFAR-10_Dirichlet_acc}

\begin{tabular}{llllllll}
\toprule
 & Settings & (500, 0.1) & (500, 1.0) & (1500, 0.1) & (1500, 1.0) & (4500, 0.1) & (4500, 1.0) \\
\midrule
\multirow{6}{*}{None} & BBSE-hard & 6.182e-02 & 1.722e-02 & 6.520e-02 & 1.926e-02 & 6.467e-02 & 1.881e-02 \\
 & RLLS-hard & 6.151e-02 & 1.711e-02 & 6.525e-02 & 1.926e-02 & 6.470e-02 & 1.880e-02 \\
 & BBSE-soft & 6.293e-02 & 1.729e-02 & 6.580e-02 & 1.946e-02 & 6.514e-02 & 1.867e-02 \\
 & RLLS-soft & 6.296e-02 & 1.727e-02 & 6.589e-02 & 1.947e-02 & 6.523e-02 & 1.867e-02 \\
 & MLLS & 6.538e-02 & 1.811e-02 & 6.689e-02 & 1.964e-02 & 6.551e-02 & 1.859e-02 \\
 & ELSA & 6.787e-02 & 1.844e-02 & 6.900e-02 & 1.981e-02 & 6.756e-02 & 1.906e-02 \\
\cline{1-8}
\multirow{4}{*}{BCTS} & BBSE-soft & 6.324e-02 & 2.080e-02 & 6.828e-02 & 2.472e-02 & 6.873e-02 & 2.480e-02 \\
 & RLLS-soft & 6.313e-02 & 2.078e-02 & 6.830e-02 & 2.473e-02 & 6.871e-02 & 2.480e-02 \\
 & MLLS & 6.729e-02 & \textbf{2.169e-02} & 6.988e-02 & \textbf{2.496e-02} & 6.962e-02 & 2.503e-02 \\
 & ELSA & \textbf{6.804e-02} & \textbf{2.196e-02} & \textbf{7.050e-02} & \textbf{2.512e-02} & \textbf{6.981e-02} & \textbf{2.511e-02} \\
\cline{1-8}
\multirow{4}{*}{NBVS} & BBSE-soft & 6.287e-02 & 2.024e-02 & 6.876e-02 & 2.452e-02 & 6.847e-02 & 2.423e-02 \\
 & BBSE-soft & 6.269e-02 & 2.027e-02 & 6.884e-02 & 2.452e-02 & 6.847e-02 & 2.424e-02 \\
 & MLLS & 6.682e-02 & 2.124e-02 & \textbf{7.044e-02} & 2.478e-02 & 6.940e-02 & 2.428e-02 \\
 & ELSA & \textbf{6.802e-02} & \textbf{2.162e-02} & \textbf{7.097e-02} & \textbf{2.498e-02} & \textbf{6.976e-02} & 2.439e-02 \\
\cline{1-8}
\multirow{4}{*}{TS} & BBSE-soft & 6.458e-02 & 1.833e-02 & 6.781e-02 & 2.016e-02 & 6.721e-02 & 1.992e-02 \\
 & RLLS-soft & 6.447e-02 & 1.831e-02 & 6.779e-02 & 2.019e-02 & 6.726e-02 & 1.991e-02 \\
 & MLLS & 6.798e-02 & 1.891e-02 & 6.947e-02 & 2.044e-02 & 6.815e-02 & 2.013e-02 \\
 & ELSA & \textbf{6.869e-02} & 1.871e-02 & 7.013e-02 & 2.071e-02 & 6.869e-02 & 2.023e-02 \\
\cline{1-8}
\multirow{4}{*}{VS} & BBSE-soft & 6.053e-02 & 1.909e-02 & 6.749e-02 & 2.447e-02 & 6.841e-02 & 2.490e-02 \\
 & RLLS-soft & 6.036e-02 & 1.909e-02 & 6.750e-02 & 2.445e-02 & 6.840e-02 & 2.490e-02 \\
 & MLLS & 6.467e-02 & 2.004e-02 & 6.930e-02 & 2.475e-02 & 6.960e-02 & \textbf{2.520e-02} \\
 & ELSA & 6.438e-02 & 2.011e-02 & 6.997e-02 & 2.475e-02 & \textbf{6.983e-02} & \textbf{2.516e-02} \\
\cline{1-8}
\bottomrule
\end{tabular}

\end{table}

\begin{table}[ht]
    \centering
        \caption{Method comparison on CIFAR-100 with Dirichlet shift. The value is the mean MSE trimmed $5\%$ extreme values. The bold values under the same setting (column) are the top-3 results.}
    \label{table.CIFAR-100_Dirichlet}
    
\begin{tabular}{llllll}
\toprule
 \multirow{2}*{Calibration}  & \multirow{2}*{Adaptation}  & \multicolumn{4}{c}{Setting: (Sample Size $n$, Shift Parameter $\alpha$)} \\
  \cline{3-6}
   &  &     (1500, 0.1) &     (1500, 1.0) &     (4500, 0.1) &     (4500, 1.0) \\
\midrule
 \multirow{6}*{None} & BBSE-hard &           8.969 &           4.304 &           1.871 &           0.687 \\
   & RLLS-hard &           5.284 &           2.072 &           0.945 &           0.405 \\
   & BBSE-soft &           4.595 &           2.637 &           1.413 &           0.488 \\
   & RLLS-soft &           4.429 &           1.563 &           0.761 &           0.309 \\
   & MLLS &           3.451 &           1.504 &           2.020 &           0.737 \\
   & ELSA &           1.086 &           2.328 &           0.288 &           0.207 \\
\midrule
 \multirow{4}*{BCTS}  & BBSE-soft &           3.955 &           2.353 &           1.085 &           0.365 \\
   & RLLS-soft &           5.130 &           1.618 &           0.683 &           0.262 \\
   & MLLS &  \textbf{0.430} &  \textbf{0.255} &           0.220 &  \textbf{0.133} \\
   & ELSA &  \textbf{0.686} &           0.861 &  \textbf{0.125} &           0.141 \\
\midrule
 \multirow{4}*{NBVS} & BBSE-soft &           3.772 &           2.451 &           1.093 &           0.360 \\
   & RLLS-soft &           5.484 &           1.583 &           0.692 &           0.264 \\
   & MLLS &           0.722 &  \textbf{0.387} &           0.269 &           0.158 \\
   & ELSA &           0.768 &           0.812 &           0.133 &           0.142 \\
\midrule
 \multirow{4}*{TS} & BBSE-soft &           3.700 &           2.500 &           1.107 &           0.391 \\
   & RLLS-soft &           6.193 &           1.847 &           0.782 &           0.278 \\
   & MLLS &           1.758 &           0.949 &           0.806 &           0.396 \\
   & ELSA &           0.920 &           0.683 &  \textbf{0.127} &           0.136 \\
\midrule
 \multirow{4}*{VS}  & BBSE-soft &           3.808 &           2.321 &           1.078 &           0.357 \\
   & RLLS-soft &           5.749 &           1.574 &           0.716 &           0.268 \\
   & MLLS &  \textbf{0.500} &  \textbf{0.334} &           0.190 &  \textbf{0.134} \\
   & ELSA &           0.958 &           0.598 &  \textbf{0.131} &  \textbf{0.134} \\
\bottomrule
\end{tabular}
\end{table}

\begin{table}[ht]
    \centering
        \caption{Prediction improvement comparison on CIFAR-100 with Dirichlet shift. The value is the mean delta accuracy trimmed $5\%$ extreme values. The bold values under the same setting (column) are the top-3 results.} 
    \label{table.CIFAR-100_Dirichlet_acc}
\begin{tabular}{llllll}
\toprule
 & Settings & (1500, 0.1) & (1500, 1.0) & (4500, 0.1) & (4500, 1.0) \\
\midrule
\multirow{6}{*}{None} & BBSE-hard & 0.145 & 0.119 & 0.161 & 0.142 \\
 & RLLS-hard & 0.136 & 0.120 & 0.171 & 0.153 \\
 & BBSE-soft & 0.151 & 0.128 & 0.164 & 0.148 \\
 & RLLS-soft & 0.141 & 0.127 & 0.177 & 0.160 \\
 & MLLS & 0.138 & 0.119 & 0.140 & 0.122 \\
 & ELSA & 0.195 & 0.152 & 0.213 & 0.173 \\
\cline{1-6}
\multirow{4}{*}{BCTS} & BBSE-soft & 0.229 & 0.185 & 0.249 & 0.208 \\
 & RLLS-soft & 0.214 & 0.182 & 0.255 & 0.214 \\
 & MLLS & 0.247 & 0.204 & 0.257 & 0.216 \\
 & ELSA & \textbf{0.263} & \textbf{0.205} & \textbf{0.272} & \textbf{0.222} \\
\cline{1-6}
\multirow{4}{*}{NBVS} & BBSE-soft & 0.227 & 0.182 & 0.246 & 0.207 \\
 & RLLS-soft & 0.211 & 0.179 & 0.253 & 0.212 \\
 & MLLS & 0.245 & 0.200 & 0.255 & 0.213 \\
 & ELSA & \textbf{0.261} & 0.204 & \textbf{0.270} & \textbf{0.220} \\
\cline{1-6}
\multirow{4}{*}{TS} & BBSE-soft & 0.227 & 0.184 & 0.243 & 0.205 \\
 & RLLS-soft & 0.208 & 0.179 & 0.249 & 0.210 \\
 & MLLS & 0.251 & \textbf{0.207} & 0.255 & 0.214 \\
 & ELSA & \textbf{0.259} & \textbf{0.204} & 0.268 & 0.219 \\
\cline{1-6}
\multirow{4}{*}{VS} & BBSE-soft & 0.221 & 0.175 & 0.249 & 0.207 \\
 & RLLS-soft & 0.205 & 0.172 & 0.254 & 0.212 \\
 & MLLS & 0.239 & 0.194 & 0.261 & 0.216 \\
 & ELSA & 0.256 & 0.197 & \textbf{0.272} & \textbf{0.222} \\
\cline{1-6}
\bottomrule
\end{tabular}
\end{table}

\begin{table}[ht]
    \centering
            \caption{Method comparison on MNIST with tweakone shift. The value is the mean MSE trimmed $5\%$ extreme values. The bold values under the same setting (column) are the top-3 results.} 
    \label{table.MINIST_tweakone}
\begin{tabular}{llllllll}
\toprule
 \multirow{2}*{Calibration}  & \multirow{2}*{Adaptation}  & \multicolumn{6}{c}{Setting: (Sample Size $n$, Shift Parameter $\rho$)} \\
  \cline{3-8}
   &  &         (500, 0.01) &          (500, 0.9) &        (1500, 0.01) &         (1500, 0.9) &        (4500, 0.01) &         (4500, 0.9) \\
\midrule
 \multirow{6}*{None} & BBSE-hard &           9.903E-03 &           2.468E-02 &           2.911E-03 &           1.341E-02 &           1.155E-03 &           7.261E-03 \\
   & RLLS-hard &           9.893E-03 &           2.164E-02 &           2.910E-03 &           1.237E-02 &           1.155E-03 &           6.991E-03 \\
   & BBSE-soft &           7.735E-03 &           2.297E-02 &           2.256E-03 &           9.890E-03 &           9.310E-04 &           4.213E-03 \\
   & RLLS-soft &           7.715E-03 &           2.065E-02 &           2.256E-03 &           9.288E-03 &           9.310E-04 &           4.161E-03 \\
   & MLLS &           8.275E-03 &           2.001E-02 &           2.622E-03 &           9.132E-03 &           1.046E-03 &           5.777E-03 \\
   & ELSA &           7.128E-03 &           4.157E-03 &           2.116E-03 &  \textbf{1.707E-03} &           8.722E-04 &           8.473E-04 \\
   \midrule
 \multirow{4}*{BCTS} & BBSE-soft &           7.415E-03 &           2.363E-02 &           2.217E-03 &           9.650E-03 &           8.864E-04 &           4.448E-03 \\
   & RLLS-soft &           7.396E-03 &           2.137E-02 &           2.217E-03 &           9.063E-03 &           8.864E-04 &           4.396E-03 \\
   & MLLS &  \textbf{6.757E-03} &  \textbf{1.991E-03} &  \textbf{2.105E-03} &  \textbf{7.603E-04} &  \textbf{8.237E-04} &  \textbf{3.925E-04} \\
   & ELSA &  \textbf{6.932E-03} &  \textbf{3.447E-03} &  \textbf{2.110E-03} &           1.792E-03 &           8.579E-04 &           8.847E-04 \\
   \midrule
 \multirow{4}*{NBVS} & BBSE-soft &           7.723E-03 &           2.343E-02 &           2.271E-03 &           9.859E-03 &           9.128E-04 &           4.477E-03 \\
   & RLLS-soft &           7.717E-03 &           2.137E-02 &           2.271E-03 &           9.294E-03 &           9.128E-04 &           4.421E-03 \\
   & MLLS &           7.138E-03 &           5.809E-03 &           2.146E-03 &           1.784E-03 &  \textbf{8.338E-04} &  \textbf{6.836E-04} \\
   & ELSA &           7.171E-03 &           3.554E-03 &           2.145E-03 &           1.863E-03 &           8.793E-04 &           8.864E-04 \\
   \midrule
 \multirow{4}*{TS}  & BBSE-soft &           7.380E-03 &           2.201E-02 &           2.173E-03 &           9.866E-03 &           8.838E-04 &           4.553E-03 \\
   & RLLS-soft &           7.368E-03 &           1.988E-02 &           2.173E-03 &           9.262E-03 &           8.838E-04 &           4.498E-03 \\
   & MLLS &           7.253E-03 &           1.600E-02 &           2.223E-03 &           5.479E-03 &           8.633E-04 &           2.559E-03 \\
   & ELSA &  \textbf{6.828E-03} &           3.612E-03 &  \textbf{2.065E-03} &           1.793E-03 &           8.562E-04 &           8.844E-04 \\
   \midrule
 \multirow{4}*{VS}   & BBSE-soft &           7.460E-03 &           2.386E-02 &           2.239E-03 &           9.917E-03 &           9.179E-04 &           4.716E-03 \\
   & RLLS-soft &           7.457E-03 &           2.152E-02 &           2.239E-03 &           9.301E-03 &           9.179E-04 &           4.650E-03 \\
   & MLLS &           6.947E-03 &  \textbf{2.943E-03} &           2.129E-03 &  \textbf{8.531E-04} &  \textbf{8.269E-04} &  \textbf{4.348E-04} \\
   & ELSA &           7.030E-03 &           3.598E-03 &           2.123E-03 &           1.762E-03 &           8.853E-04 &           9.053E-04 \\
\bottomrule
\end{tabular}
\end{table}

\begin{table}[ht]
    \centering
            \caption{Prediction improvement comparison on MNIST with tweakone shift. The value is the mean delta accuracy trimmed $5\%$ extreme values. The bold values under the same setting (column) are the top-3 results.} 
    \label{table.MINIST_tweakone_acc}
\begin{tabular}{llllllll}
\toprule
 & Settings & (500, 0.01) & (500, 0.9) & (1500, 0.01) & (1500, 0.9) & (4500, 0.01) & (4500, 0.9) \\
\midrule
\multirow{6}{*}{None} & BBSE-hard & 1.311e-03 & 4.993e-02 & 2.474e-03 & 5.033e-02 & 3.262e-03 & 5.317e-02 \\
 & RLLS-hard & 1.311e-03 & 4.936e-02 & 2.474e-03 & 4.995e-02 & 3.262e-03 & 5.302e-02 \\
 & BBSE-soft & 1.267e-03 & 5.218e-02 & 2.548e-03 & 5.438e-02 & 3.281e-03 & 5.625e-02 \\
 & RLLS-soft & 1.267e-03 & 5.162e-02 & 2.548e-03 & 5.399e-02 & 3.281e-03 & 5.618e-02 \\
 & MLLS & 1.022e-03 & 5.669e-02 & 1.815e-03 & 5.667e-02 & 2.632e-03 & 5.637e-02 \\
 & ELSA & 1.267e-03 & 5.867e-02 & 2.422e-03 & 5.881e-02 & 3.195e-03 & 6.018e-02 \\
\cline{1-8}
\multirow{4}{*}{BCTS} & BBSE-soft & -6.667e-04 & 5.253e-02 & 2.800e-03 & 5.759e-02 & 3.783e-03 & 6.102e-02 \\
 & RLLS-soft & -6.444e-04 & 5.229e-02 & 2.800e-03 & 5.744e-02 & 3.783e-03 & 6.089e-02 \\
 & MLLS & 8.889e-05 & \textbf{6.233e-02} & 2.785e-03 & \textbf{6.379e-02} & 3.800e-03 & 6.472e-02 \\
 & ELSA & 1.111e-04 & 6.124e-02 & 2.770e-03 & 6.262e-02 & 3.756e-03 & 6.442e-02 \\
\cline{1-8}
\multirow{4}{*}{NBVS} & BBSE-soft & -2.444e-04 & 5.198e-02 & 2.719e-03 & 5.716e-02 & 3.775e-03 & 6.098e-02 \\
 & RLLS-soft & -2.444e-04 & 5.162e-02 & 2.719e-03 & 5.690e-02 & 3.775e-03 & 6.085e-02 \\
 & MLLS & 5.778e-04 & 6.167e-02 & 2.689e-03 & 6.338e-02 & 3.689e-03 & 6.445e-02 \\
 & ELSA & 5.111e-04 & 6.064e-02 & 2.726e-03 & 6.242e-02 & 3.726e-03 & 6.440e-02 \\
\cline{1-8}
\multirow{4}{*}{TS} & BBSE-soft & \textbf{2.244e-03} & 5.387e-02 & \textbf{3.311e-03} & 5.792e-02 & \textbf{3.921e-03} & 6.159e-02 \\
 & RLLS-soft & \textbf{2.244e-03} & 5.331e-02 & \textbf{3.311e-03} & 5.767e-02 & \textbf{3.921e-03} & 6.146e-02 \\
 & MLLS & \textbf{2.711e-03} & \textbf{6.387e-02} & 3.296e-03 & \textbf{6.524e-02} & \textbf{3.943e-03} & \textbf{6.549e-02} \\
 & ELSA & \textbf{2.533e-03} & \textbf{6.278e-02} & \textbf{3.311e-03} & 6.337e-02 & \textbf{3.921e-03} & \textbf{6.518e-02} \\
\cline{1-8}
\multirow{4}{*}{VS} & BBSE-soft & -2.400e-03 & 5.087e-02 & 2.126e-03 & 5.724e-02 & 3.832e-03 & 6.062e-02 \\
 & RLLS-soft & -2.400e-03 & 5.067e-02 & 2.126e-03 & 5.693e-02 & 3.832e-03 & 6.046e-02 \\
 & MLLS & -2.156e-03 & 6.071e-02 & 2.141e-03 & \textbf{6.350e-02} & 3.805e-03 & \textbf{6.482e-02} \\
 & ELSA & -2.267e-03 & 6.009e-02 & 2.207e-03 & 6.250e-02 & 3.844e-03 & 6.471e-02 \\
\cline{1-8}
\bottomrule
\end{tabular}
\end{table}

\begin{table}[ht]
    \centering
            \caption{Method comparison on CIFAR-10 with tweakone shift. The value is the mean MSE trimmed $5\%$ extreme values. The bold values under the same setting (column) are the top-3 results.}
    \label{table.CIFAR-10_tweakone}
\begin{tabular}{llllllll}
\toprule
 \multirow{2}*{Calibration}  & \multirow{2}*{Adaptation}  & \multicolumn{6}{c}{Setting: (Sample Size $n$, Shift Parameter $\rho$)} \\
  \cline{3-8}
   &  &         (500, 0.01) &          (500, 0.9) &        (1500, 0.01) &         (1500, 0.9) &        (4500, 0.01) &         (4500, 0.9) \\
\midrule
 \multirow{6}*{None} & BBSE-hard &           8.587E-03 &           7.757E-02 &           2.980E-03 &           2.577E-02 &           9.673E-04 &           9.601E-03 \\
   & RLLS-hard &           8.277E-03 &           6.795E-02 &           2.962E-03 &           2.387E-02 &           9.673E-04 &           9.333E-03 \\
   & BBSE-soft &           6.433E-03 &           6.309E-02 &           2.216E-03 &           2.024E-02 &           6.637E-04 &           8.716E-03 \\
   & RLLS-soft &           6.268E-03 &           5.800E-02 &           2.205E-03 &           1.961E-02 &           6.637E-04 &           8.683E-03 \\
   & MLLS &           6.213E-03 &           1.171E-01 &           2.351E-03 &           7.724E-02 &           9.129E-04 &           6.918E-02 \\
   & ELSA &           5.729E-03 &  \textbf{3.817E-03} &           1.990E-03 &           1.562E-03 &           6.161E-04 &           5.878E-04 \\
   \midrule
 \multirow{4}*{BCTS} & BBSE-soft &           6.349E-03 &           6.278E-02 &           2.172E-03 &           1.827E-02 &           6.749E-04 &           7.112E-03 \\
   & RLLS-soft &           6.223E-03 &           5.694E-02 &           2.163E-03 &           1.769E-02 &           6.749E-04 &           7.037E-03 \\
   & MLLS &  \textbf{5.242E-03} &           6.489E-03 &  \textbf{1.788E-03} &           2.392E-03 &  \textbf{5.669E-04} &           1.169E-03 \\
   & ELSA &           5.648E-03 &  \textbf{4.147E-03} &           1.942E-03 &  \textbf{1.391E-03} &           6.065E-04 &  \textbf{5.022E-04} \\
   \midrule
 \multirow{4}*{NBVS} & BBSE-soft &           6.465E-03 &           6.713E-02 &           2.170E-03 &           1.801E-02 &           6.816E-04 &           7.088E-03 \\
   & RLLS-soft &           6.334E-03 &           6.172E-02 &           2.161E-03 &           1.760E-02 &           6.816E-04 &           7.031E-03 \\
   & MLLS &  \textbf{5.469E-03} &           1.333E-02 &  \textbf{1.857E-03} &           4.281E-03 &  \textbf{5.731E-04} &           2.325E-03 \\
   & ELSA &           5.877E-03 &           4.356E-03 &           1.968E-03 &           1.604E-03 &           6.125E-04 &           5.496E-04 \\
   \midrule
 \multirow{4}*{TS} & BBSE-soft &           6.457E-03 &           6.357E-02 &           2.172E-03 &           2.078E-02 &           6.603E-04 &           9.362E-03 \\
   & RLLS-soft &           6.282E-03 &           5.885E-02 &           2.163E-03 &           2.023E-02 &           6.603E-04 &           9.342E-03 \\
   & MLLS &           6.445E-03 &           1.457E-01 &           2.400E-03 &           1.034E-01 &           9.904E-04 &           9.584E-02 \\
   & ELSA &           5.674E-03 &  \textbf{3.963E-03} &           1.955E-03 &  \textbf{1.303E-03} &           6.105E-04 &  \textbf{5.115E-04} \\
   \midrule
 \multirow{4}*{VS}  & BBSE-soft &           6.436E-03 &           6.561E-02 &           2.173E-03 &           1.822E-02 &           6.885E-04 &           6.906E-03 \\
   & RLLS-soft &           6.308E-03 &           5.929E-02 &           2.164E-03 &           1.772E-02 &           6.885E-04 &           6.833E-03 \\
   & MLLS &  \textbf{5.438E-03} &           9.515E-03 &  \textbf{1.832E-03} &           2.615E-03 &  \textbf{5.643E-04} &           1.232E-03 \\
   & ELSA &           5.717E-03 &           4.904E-03 &           1.953E-03 &  \textbf{1.425E-03} &           6.157E-04 &  \textbf{5.214E-04} \\
\bottomrule
\end{tabular}
\end{table}

\begin{table}[ht]
    \centering
            \caption{Prediction improvement comparison on CIFAR-10 with tweakone shift. The value is the mean delta accuracy trimmed $5\%$ extreme values. The bold values under the same setting (column) are the top-3 results.}
    \label{table.CIFAR-10_tweakone_acc}
\begin{tabular}{llllllll}
\toprule
 & Settings & (500, 0.01) & (500, 0.9) & (1500, 0.01) & (1500, 0.9) & (4500, 0.01) & (4500, 0.9) \\
\midrule
\multirow{6}{*}{None} & BBSE-hard & 5.489e-03 & 1.480e-01 & 7.630e-03 & 1.580e-01 & 7.793e-03 & 1.650e-01 \\
 & RLLS-hard & 5.511e-03 & 1.479e-01 & 7.630e-03 & 1.580e-01 & 7.793e-03 & 1.650e-01 \\
 & BBSE-soft & 5.422e-03 & 1.526e-01 & 7.615e-03 & 1.589e-01 & 7.886e-03 & 1.648e-01 \\
 & RLLS-soft & 5.467e-03 & 1.526e-01 & 7.615e-03 & 1.590e-01 & 7.886e-03 & 1.648e-01 \\
 & MLLS & 6.000e-03 & 1.627e-01 & 7.630e-03 & 1.617e-01 & 7.980e-03 & 1.633e-01 \\
 & ELSA & 6.244e-03 & 1.676e-01 & 7.844e-03 & 1.676e-01 & 8.005e-03 & 1.690e-01 \\
\cline{1-8}
\multirow{4}{*}{BCTS} & BBSE-soft & 7.733e-03 & 1.599e-01 & 1.156e-02 & 1.701e-01 & 1.232e-02 & 1.772e-01 \\
 & RLLS-soft & 7.733e-03 & 1.598e-01 & \textbf{1.157e-02} & 1.700e-01 & 1.232e-02 & 1.772e-01 \\
 & MLLS & \textbf{8.533e-03} & \textbf{1.728e-01} & \textbf{1.181e-02} & 1.757e-01 & 1.230e-02 & 1.782e-01 \\
 & ELSA & \textbf{8.467e-03} & \textbf{1.739e-01} & \textbf{1.184e-02} & \textbf{1.773e-01} & 1.236e-02 & \textbf{1.794e-01} \\
\cline{1-8}
\multirow{4}{*}{NBVS} & BBSE-soft & 6.956e-03 & 1.559e-01 & 1.108e-02 & 1.684e-01 & 1.166e-02 & 1.760e-01 \\
 & RLLS-soft & 6.956e-03 & 1.559e-01 & 1.108e-02 & 1.683e-01 & 1.166e-02 & 1.759e-01 \\
 & MLLS & 7.600e-03 & 1.700e-01 & 1.122e-02 & 1.744e-01 & 1.168e-02 & 1.766e-01 \\
 & ELSA & 7.733e-03 & \textbf{1.728e-01} & 1.113e-02 & \textbf{1.764e-01} & 1.171e-02 & \textbf{1.790e-01} \\
\cline{1-8}
\multirow{4}{*}{TS} & BBSE-soft & 5.511e-03 & 1.592e-01 & 7.719e-03 & 1.656e-01 & 7.926e-03 & 1.722e-01 \\
 & RLLS-soft & 5.533e-03 & 1.593e-01 & 7.719e-03 & 1.655e-01 & 7.926e-03 & 1.722e-01 \\
 & MLLS & 6.156e-03 & 1.712e-01 & 7.822e-03 & 1.706e-01 & 7.884e-03 & 1.726e-01 \\
 & ELSA & 6.111e-03 & \textbf{1.730e-01} & 7.881e-03 & 1.732e-01 & 7.958e-03 & 1.749e-01 \\
\cline{1-8}
\multirow{4}{*}{VS} & BBSE-soft & 7.067e-03 & 1.551e-01 & 1.124e-02 & 1.689e-01 & \textbf{1.269e-02} & 1.772e-01 \\
 & RLLS-soft & 7.067e-03 & 1.551e-01 & 1.124e-02 & 1.689e-01 & \textbf{1.269e-02} & 1.772e-01 \\
 & MLLS & 7.600e-03 & 1.684e-01 & 1.139e-02 & 1.754e-01 & \textbf{1.275e-02} & 1.788e-01 \\
 & ELSA & \textbf{7.844e-03} & 1.711e-01 & 1.144e-02 & \textbf{1.767e-01} & \textbf{1.270e-02} & \textbf{1.799e-01} \\
\cline{1-8}
\bottomrule
\end{tabular}
\end{table}

\begin{table}[ht!]
    \centering
       \caption{Method comparison on CIFAR-100 with tweakone shift. The value is the mean MSE trimmed $5\%$ extreme values. The bold values under the same setting (column) are the top-3 results.}
    \label{table.CIFAR-100_tweakone}
    \begin{tabular}{llllll}
\toprule
 \multirow{2}*{Calibration}  & \multirow{2}*{Adaptation}  & \multicolumn{4}{c}{Setting: (Sample Size $n$, Shift Parameter $\rho$)} \\
  \cline{3-6}
   &  &    (1500, 0.01) &     (1500, 0.9) &    (4500, 0.01) &     (4500, 0.9) \\
\midrule
 \multirow{6}*{None} & BBSE-hard &           2.755 &          29.304 &           0.390 &           5.229 \\
   & RLLS-hard &           1.246 &          28.798 &           0.230 &           3.942 \\
   & BBSE-soft &           1.750 &          14.294 &           0.286 &           3.399 \\
   & RLLS-soft &           0.944 &          28.426 &           0.183 &           3.262 \\
   & MLLS &           0.966 &           7.295 &           0.480 &           3.822 \\
   & ELSA &           1.182 &           5.640 &           0.130 &           1.068 \\
   \midrule
 \multirow{4}*{BCTS} & BBSE-soft &           1.864 &          12.403 &           0.225 &           2.638 \\
   & RLLS-soft &           0.937 &          40.160 &           0.159 &           3.207 \\
   & MLLS &  \textbf{0.169} &  \textbf{1.725} &  \textbf{0.079} &           1.099 \\
   & ELSA &           0.408 &           2.698 &           0.089 &  \textbf{0.400} \\
   \midrule
 \multirow{4}*{NBVS} & BBSE-soft &           2.074 &          12.004 &           0.223 &           2.571 \\
   & RLLS-soft &           0.981 &          37.647 &           0.159 &           3.072 \\
   & MLLS &  \textbf{0.222} &  \textbf{2.239} &           0.091 &           1.024 \\
   & ELSA &           0.548 &           2.562 &           0.091 &           0.453 \\
   \midrule
 \multirow{4}*{TS} & BBSE-soft &           1.788 &          13.292 &           0.232 &           2.765 \\
   & RLLS-soft &           0.988 &          49.960 &           0.169 &           4.078 \\
   & MLLS &           0.578 &           4.378 &           0.204 &           2.438 \\
   & ELSA &           0.412 &  \textbf{2.207} &           0.086 &  \textbf{0.409} \\
   \midrule
 \multirow{4}*{VS}  & BBSE-soft &           1.654 &          10.533 &           0.218 &           2.543 \\
   & RLLS-soft &           0.961 &          36.722 &           0.158 &           3.355 \\
   & MLLS &  \textbf{0.192} &           2.333 &  \textbf{0.073} &           1.155 \\
   & ELSA &           0.631 &           2.669 &  \textbf{0.086} &  \textbf{0.430} \\
\bottomrule
\end{tabular}
\end{table}

\begin{table}[ht]
    \centering
            \caption{Prediction improvement comparison on CIFAR-100 with tweakone shift. The value is the mean delta accuracy trimmed $5\%$ extreme values. The bold values under the same setting (column) are the top-3 results.}
    \label{table.CIFAR-100_tweakone_acc}
\begin{tabular}{llllll}
\toprule
 & Settings & (1500, 0.01) & (1500, 0.9) & (4500, 0.01) & (4500, 0.9) \\
\midrule
\multirow{6}{*}{None} & BBSE-hard & 0.115 & 0.208 & 0.135 & 0.230 \\
 & RLLS-hard & 0.116 & 0.182 & 0.147 & 0.244 \\
 & BBSE-soft & 0.122 & 0.223 & 0.140 & 0.245 \\
 & RLLS-soft & 0.122 & 0.186 & 0.152 & 0.256 \\
 & MLLS & 0.108 & 0.214 & 0.111 & 0.219 \\
 & ELSA & 0.147 & 0.303 & 0.160 & 0.322 \\
\cline{1-6}
\multirow{4}{*}{BCTS} & BBSE-soft & 0.172 & 0.355 & 0.194 & 0.393 \\
 & RLLS-soft & 0.171 & 0.285 & 0.199 & 0.398 \\
 & MLLS & 0.190 & 0.413 & 0.201 & 0.427 \\
 & ELSA & \textbf{0.193} & \textbf{0.432} & \textbf{0.207} & \textbf{0.452} \\
\cline{1-6}
\multirow{4}{*}{NBVS} & BBSE-soft & 0.171 & 0.345 & 0.194 & 0.381 \\
 & RLLS-soft & 0.168 & 0.275 & 0.199 & 0.386 \\
 & MLLS & 0.188 & 0.395 & 0.200 & 0.414 \\
 & ELSA & \textbf{0.192} & \textbf{0.432} & \textbf{0.206} & \textbf{0.452} \\
\cline{1-6}
\multirow{4}{*}{TS} & BBSE-soft & 0.173 & 0.356 & 0.192 & 0.387 \\
 & RLLS-soft & 0.172 & 0.254 & 0.196 & 0.384 \\
 & MLLS & 0.192 & 0.412 & 0.199 & 0.418 \\
 & ELSA & \textbf{0.193} & 0.429 & 0.204 & 0.447 \\
\cline{1-6}
\multirow{4}{*}{VS} & BBSE-soft & 0.165 & 0.345 & 0.194 & 0.393 \\
 & RLLS-soft & 0.163 & 0.276 & 0.198 & 0.395 \\
 & MLLS & 0.183 & 0.397 & 0.202 & 0.427 \\
 & ELSA & 0.186 & \textbf{0.430} & \textbf{0.207} & \textbf{0.456} \\
\cline{1-6}
\bottomrule
\end{tabular}
\end{table}

\section{Motivations for $\mathbf{h}_{\mathrm{ELSA}}(\x)$}
\label{appendx-motivation}
The motivation of the $h_{\mathrm{ELSA}}(\boldsymbol{x})$ function starts from the score function with respect to $\boldsymbol{\omega}^{(-1)}=(\omega_1,\dots,\omega_{k-1})$, and denoted by $\mathbf{S}_{\boldsymbol{\omega}}(\boldsymbol{x})$. The $i$-th element of the score function is given by
$$
[\mathbf{S}_{\boldsymbol{\omega}}(\boldsymbol{x})]_i\propto\frac{p_s(\boldsymbol{x})}{p_t(\boldsymbol{x})}\left\{p_s(y=i|\boldsymbol{x})-p_s(y=k|\boldsymbol{x})\right\},\quad i=1,\dots,k-1.
$$
We could use $\mathbf{S}_{\boldsymbol{\omega}}(\boldsymbol{x})$ directly to construct an influence function for a RAL estimator. But we can improve efficiency (i.e., reducing estimation error) by projecting it to the perpendicular space $\Lambda^{\perp}$. Prioritizing computational efficiency and feasibility, we approximate the projection $\Pi(\mathbf{S}_{\boldsymbol{\omega}}(\boldsymbol{x})|\Lambda^\perp)$ with
$$
\Pi(S_i(\boldsymbol{x})\mid \Lambda^\perp)\propto \kappa(\boldsymbol{x})S_i(\boldsymbol{x}),
$$
where $\kappa(\boldsymbol{x})$ is a "bridging" function that needs to satisfy
$$
\frac{1-\kappa(\boldsymbol{x})}{\kappa(\boldsymbol{x})}=E_t\left\{\frac{1-\Pr(R=1|Y,\boldsymbol{X})}{\Pr(R=1|Y,\boldsymbol{X})}|\boldsymbol{x}\right\}.
$$
Under the label shift assumption, we further have
$$
\frac{1-\kappa(\boldsymbol{x})}{\kappa(\boldsymbol{x})}=\frac{1-\pi}{\pi}E_t\left\{\frac{p_t(Y)}{p_s(Y)}|\boldsymbol{x}\right\}.
$$

Next we will show tht the proposed function $h_{\mathrm{ELSA}}(\boldsymbol{x})$ is proportional to $\kappa(\boldsymbol{x})\mathbf{S}_i(\boldsymbol{x})$. Because $\kappa(\boldsymbol{x})\mathbf{S}_i(\boldsymbol{x})=\kappa(\boldsymbol{x})\dfrac{p_s(\boldsymbol{x})}{p_t(\boldsymbol{\boldsymbol{x}})}\left\{p_s(y=i|\boldsymbol{x})-p_s(y=k|\boldsymbol{x})\right\}$, we only need to verify that the denominator of $h_{\mathrm{ELSA}}(\boldsymbol{x})$ is proportional to the reciprocal $\kappa(\boldsymbol{x})\dfrac{p_s(\boldsymbol{x})}{p_t(\boldsymbol{x})}$: the denominator of $h_{\mathrm{ELSA}}(\boldsymbol{x})$ is
$$
\begin{aligned}
&\frac{E_s(\rho^2\mid \boldsymbol{x})}{\pi} + \frac{E_s(\rho\mid \boldsymbol{x})}{1-\pi}\\
\propto& \frac{p_t(\boldsymbol{x})}{p_s(\boldsymbol{x})}\frac{1-\kappa(\boldsymbol{x})}{\kappa(\boldsymbol{x})}\frac1{1-\pi} + \frac{p_t(\boldsymbol{x})}{p_s(\boldsymbol{x})}\frac1{1-\pi}\\
\propto& \frac{p_t(\boldsymbol{x})}{p_s(\boldsymbol{x})}\frac{1}{\kappa(\boldsymbol{x})}.
\end{aligned}
$$

\section{Additional Experiments}
\label{supp:simu}
In this section, we conducted additional experiments to evaluate the performance of our proposed method.

\subsection{Performances under Different Sample Sizes and Shifts}

In this part, we explore the performances of our proposed method under different sample sizes and shifts. 
We study on the three datasets: MNIST, CIFAR-10 and CIFAR-100.
The model setting are the same as the ones in Section 4.
As for the label shift mechanism, besides the Dirichlet shift, we also consider the “tweak-one” shift \cite{lipton2018detecting}, in which the the prior of the 4th class is $\rho$ and the other classes' priors are $(1-\rho)/(k-1)$ with total class number as $k$.
We consider the competitve methods from the followings:
\vspace{-0.15in}
\begin{itemize}
    \item BBSE-hard and BBSE-soft \cite{lipton2018detecting}
    \vspace{-0.1in}
    \item RLLS-hard and RLLS-soft \cite{azizzadenesheli2018regularized}
    \vspace{-0.1in}
    \item MLLS \cite{alexandari2020, garg2020}
\end{itemize}
\vspace{-0.15in}
We add the post-prediction calibrations to the adaptation methods except BBSE-hard and RLLS-hard, and the calibration methods include BCTS, NBVS, TS, VS.
Thus, we totally do the performance comparison over 22 methods.

In our comparison, we consider two metrics:
the first one is the weight MSE, i.e. $\|\hat{\bo} - \bo\|^2$; another is the prediction improvement, which is the delta accuracy of the domain-adapted model relative to the original model. 
The numerical results are summarized in Table~\ref{table.MINIST_Dirichlet}-\ref{table.CIFAR-100_tweakone_acc}.
For Dirichlet shift, we consider the shift parameter $\alpha\in \{0.1, 1.0\}$; and for tweakone shift, we allow the shift parameter $\rho \in \{0.01, 0.9\}$.
We range the sample size $n \in \{500,1500, 4500\}$.
Note that for the cases with CIFAR-100, we don't include the results with sample size $n = 500$.
Because when $n = 500$, some classes will have zero sample in the testing datasets.

First of all, it can be seen that, without calibration, our proposed method could always have the best estimation performance than the other methods and the improvements are very significant. 
For example, for the CIFAR-10 dataset, our method could reduce about $50\%$ MSE under Dirichlet shift and under tweakone shift with $\rho=0.9$.
By adding the post-prediction calibration, all the four methods (BBSE-soft, RLLS-soft, MLLS and ELSA) get improved, while MLLS and ELSA would usually be included in the top-3 results.
Note the estimation performances of our proposed method are very stable under different calibrations, but MLLS's performance significantly relies on the calibration method.
For example, for the MNIST dataset, MLLS could reach $3.556\times 10^{-3}$ MSE with BCTS while the MSE deteriorates to $2.199\times 10^{-2}$ when adopting TS. 
However, for our proposed method, the MSEs are always about $3.5\times 10^{-3}$ regardless the calibrations.
Thus, we believe our proposed method could consistently achieve the high accuracy.

\subsection{Computation Efficiency Study}

In this section, we conduct additional experiments to validate the computation efficiency of our method.
First of all, we show that even the proper calibration could improve the accuracy, the computation is costly. 
We record the computation time over different calibrations. 
We focus on the CIFAR-10 dataset and use the Dirichlet shift with $\alpha=0.1$.
We range the sample size from $2000$ to $10000$.
The results are shown in Table~\ref{tab.CIFAR-10_Cali_Time}.
It can be seen that the computation time increases almost linearly with the sample size.
VS is the most computational intensive method and then follows by BCTS.

\begin{table}[t]
\caption{The computation time (second) for different calibration methods on CIFAR-10. The Dirichlet shift parameter $\alpha$ is fixed as 0.1.}
\label{tab.CIFAR-10_Cali_Time}
\centering
\begin{tabular}{crrr}
\toprule
\multirow{2}{*}{Calibration} &  \multicolumn{3}{c}{Sample Size}  \\
  \cline{2-4}
 & 2000 &  5000 &  10000 \\
\midrule
TS          &     0.671 &     1.420 &      2.826 \\
VS          &     4.655 &    13.330 &     22.812 \\
NBVS        &     1.125 &     2.457 &      4.687 \\
BCTS        &     2.035 &     4.296 &      9.508 \\
\bottomrule
\end{tabular}
\end{table}

From our previous performance comparison study and the experiments in \cite{alexandari2020}, it has been observed that MLLS with BCTS would often reach the state-of-the-art performance.
Hence, we will mainly compare our method with MLLS with none and BCTS calibrations.
Note that as both MLLS and our ELSA are iterative methods, thus, in the following, we will examine the computation efficiency and estimation accuracy per iteration. 
In the experiment, we study on CIFAR-10 and CIFAR-100.
We adopt the Dirichlet shift and set sample size $n=10000$, shift parameter $\alpha=0.1$.
The other settings are the same as those in Section 4.
The results are shown in Figure~~\ref{fig.MSE_Iteration}.
It can be seen that our method (ELSA) could reach the lowest MSE even without the post-prediction calibration.
The convergence of ELSA is very fast and within 1-2 iterations.
In terms of the computation time, ELSA costs almost the same time as MLLS without calibration. 
Although the BCTS calibration could improve the estimation performance for MLLS, the computation for the calibration is costly. Thus, we think our proposed method is efficient in terms of computation and estimation accuracy.

\begin{figure}[t]
\centering
\begin{tabular}{@{}cc@{}}
\includegraphics[width=0.45\linewidth]{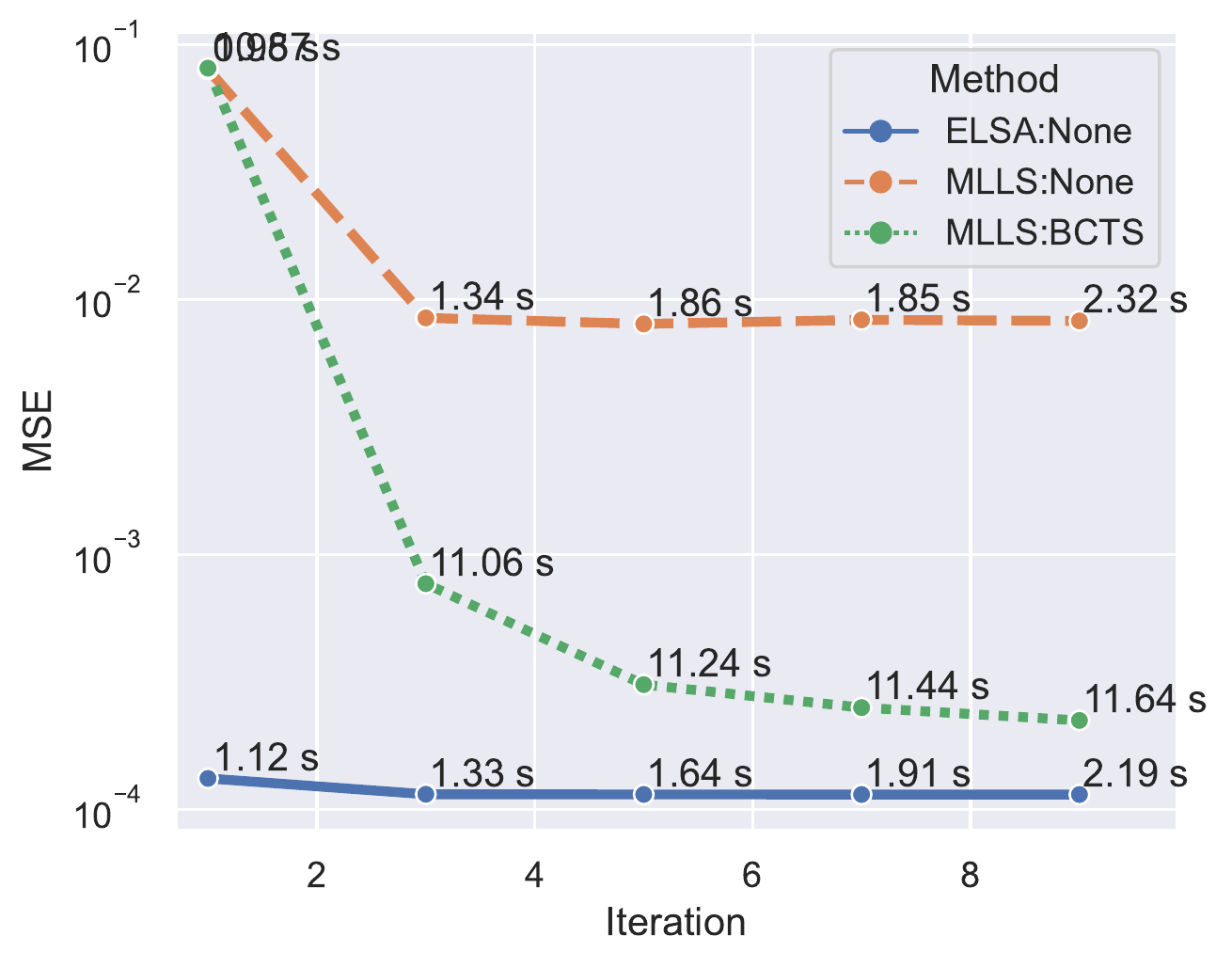}&
\includegraphics[width=0.45\linewidth]{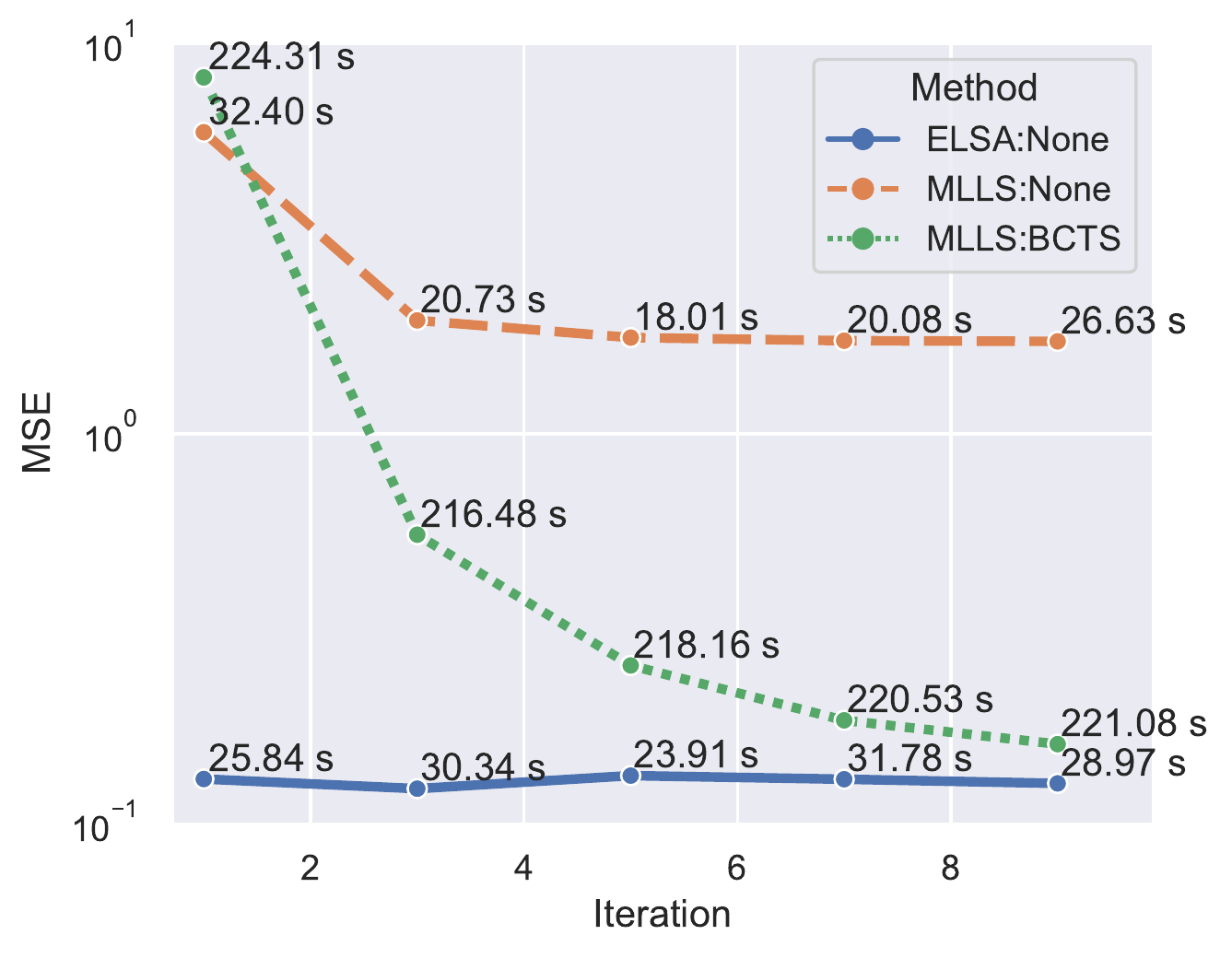}\\
~~~~(a) CIFAR-10 & ~~~~(b) CIFAR-100\\
\end{tabular}
    \caption{MSE v.s. Iterations. We adopt the Dirichlet shift and set sample size $n=10000$, shift parameter $\alpha=0.1$.}
    \label{fig.MSE_Iteration}
\end{figure}

\section{Proofs}\label{sup:proofs}

\subsection{Proof of Theorem~\ref{th:perpendicualr}}
We first state the full version of the theorem, adding the nuisance spaces into the theorem.
\begin{theorem*}
Without a loss of generality, we replace $w_k$ with $w_k=(1-\sum_{i=1}^{k-1}w_ip_i)/p_k$, and replace $p_k$ with $p_k=1-\sum_{i=1}^{k-1}p_i$, where $p_i\coloneqq \Pr{}_{\!s}(y=i)$.
The log-likelihood function is given by $r\log(\pi)+(1-r)\log(1-\pi)+\sum_{i=1}^{k-1}r\mathrm{I}(y=i)\left\{\log p(\x| y=i)+\log(p_i)\right\}+r\mathrm{I}(y=k)\left\{\log p(\x| y=k)+\log(1-p_1-\dots-p_{k-1})\right\}+(1-r)\log\{p(\x| y=1) w_1p_1+p(\x| y=2)w_2p_2+\dots+p(\x| y=k)(1-w_1p_1-\dots-w_{k-1}p_{k-1})\}$.
The nuisances are $p_1,\dots,p_{k-1}$, $p(\x| y=1)$,\dots, $p(\x| y=k)$, and $\pi$.
Then the nuisance tangent spaces are
\begin{enumerate}
    \item the nuisance tangent space with respect to $p(\x| y=j)$ is given by $
    \Lambda_j=\{r\mathrm{I}(y=j)\g_j(\x)+(1-r)p_t(y=j|\x)\g_j(\x):\g_j(\x)\in\mathbb{R}^{k-1},\int\g_j(\x)p(\x| y=j)d\x=\0\}$, for $j=1,\dots, k$.
    \item The nuisance tangent space with respect to $p_j$ is given by $\Lambda_{p_j}=\big([r\{{\mathrm{I}(y=j)}/{p_j}-{\mathrm{I}(y=k)}/{(1-p_1-\dots-p_{k-1})}\}+(1-r)\omega_j{\left\{p(\x| y=j)-p(\x| y=k)\right\}}/{p_t(\x)}]\b:\b\in\mathbb{R}^{k-1}\big)$, for $j=1,\dots,k-1$.
    \item The nuisance tangent space with respect to $\pi$ is given by $\Lambda_{\pi}=\big[\{{r}/{\pi}-{(1-r)}/{(1-\pi)}\}\a:\a\in\mathbb{R}^{k-1}\big]$.
\end{enumerate}
Then, the perpendicular space $\Lambda^{\perp}$, which is orthogonal to $\Lambda_j$ ($j=1,\dots,k$), $\Lambda_{p_j}$ ($j=1,\dots,k$), and $\Lambda_\pi$, is given by
\begin{align}
\Lambda^{\perp}=
\Bigg[
\frac{r}{\pi}\Bigg\{1-\sum_{i=1}^k \omega_i \mathrm{I}(y=i)\Bigg\} E(\mathbf{h} | y=k)\notag+\Bigg\{\frac{r}{\pi} \sum_{i=1}^k \omega_i \mathrm{I}(y=i)-\frac{1-r}{1-\pi}\Bigg\} \mathbf{h}(\mathbf{x}):E_t(\mathbf{h})=\0\in\mathbb{R}^{k-1}
\Bigg].
\end{align}
\end{theorem*}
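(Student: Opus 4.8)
The plan is to work in the Hilbert space $\mathcal{H}$ of mean-zero, square-integrable $(k-1)$-dimensional functions of $(\x,y,r)$ equipped with the covariance inner product $\langle\f_1,\f_2\rangle=E\{\f_1^\top\f_2\}$, and to characterize $\Lambda^\perp$ as the elements orthogonal to each nuisance tangent space. First I would record the three tangent families by differentiating the stated log-likelihood: the score for $\pi$ is $r/\pi-(1-r)/(1-\pi)$, giving $\Lambda_\pi$; differentiating the $p_j$-terms (including the induced dependence through $p_k=1-\sum_{i<k}p_i$ and through the target factor $p_t(\x)=\sum_i\omega_i p_i\,p(\x\mid y=i)$) yields the score $r\{\mathrm{I}(y=j)/p_j-\mathrm{I}(y=k)/p_k\}+(1-r)\omega_j\{p(\x\mid y=j)-p(\x\mid y=k)\}/p_t(\x)$, giving $\Lambda_{p_j}$; and perturbing the nonparametric factors $p(\x\mid y=j)$ along mean-zero directions $\g_j(\x)$ produces $[r\mathrm{I}(y=j)+(1-r)p_t(y=j\mid\x)]\g_j(\x)$, giving $\Lambda_j$. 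The only nonroutine point here is recognizing that $\omega_j p_j\,p(\x\mid y=j)/p_t(\x)=p_t(y=j\mid\x)$, which puts $\Lambda_j$ into the advertised form.

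The crucial structural input is that $y$ is unobserved on the target domain, so every $\varphi\in\mathcal{H}$ must take the form $\varphi=r\sum_{i=1}^k\mathrm{I}(y=i)\boldsymbol{\alpha}_i(\x)+(1-r)\boldsymbol{\eta}(\x)$ for some vector functions $\boldsymbol{\alpha}_i,\boldsymbol{\eta}$. I would then impose orthogonality to the infinite-dimensional spaces $\Lambda_j$ first. Computing $\langle\varphi,\,[r\mathrm{I}(y=j)+(1-r)p_t(y=j\mid\x)]\g_j\rangle$ by conditioning on $r$ and using $p_t(\x,y=j)=\omega_j p_j\,p(\x\mid y=j)$, the condition collapses to $\int\{\pi\boldsymbol{\alpha}_j(\x)+(1-\pi)\omega_j\boldsymbol{\eta}(\x)\}^\top\g_j(\x)\,p(\x\mid y=j)\,d\x=\0$ for every mean-zero $\g_j$. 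Since the orthogonal complement of the mean-zero functions in $L^2(p(\cdot\mid y=j))$ is the constants, this forces $\pi\boldsymbol{\alpha}_j(\x)+(1-\pi)\omega_j\boldsymbol{\eta}(\x)=\c_j$ for a constant vector $\c_j$, i.e. $\boldsymbol{\alpha}_j=\{\c_j-(1-\pi)\omega_j\boldsymbol{\eta}\}/\pi$. This single reduction is the heart of the argument and I expect it to be the main obstacle, both in justifying the ``constants are the orthocomplement'' step cleanly in the vector-valued setting and in carrying the bookkeeping that it triggers.

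With $\boldsymbol{\alpha}_j$ eliminated, the remaining work is linear. Imposing $E\{\varphi\}=\0$ together with orthogonality to $\Lambda_\pi$ gives the two identities $\sum_i p_i E_s\{\boldsymbol{\alpha}_i\mid y=i\}=\0$ and $E_t\{\boldsymbol{\eta}\}=\0$; the latter is precisely the constraint $E_t(\h)=\0$ in the statement. Orthogonality to each $\Lambda_{p_j}$, after substituting the formula for $\boldsymbol{\alpha}_j$ and simplifying with $\int\boldsymbol{\eta}\,p(\x\mid y=j)\,d\x=E\{\boldsymbol{\eta}\mid y=j\}$, yields the relations $\c_j-\c_k=(1-\pi)(\omega_j-\omega_k)E\{\boldsymbol{\eta}\mid y=k\}$. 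Feeding these into $\sum_i p_i\c_i=0$ (which follows from $\sum_i p_i E_s\{\boldsymbol{\alpha}_i\mid y=i\}=\0$ and $\sum_i p_i\omega_i\,p(\x\mid y=i)=p_t(\x)$) pins down $\c_k$, and hence every constant, as $\c_i=-(1-\pi)(1-\omega_i)E\{\boldsymbol{\eta}\mid y=k\}$.

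Finally I would substitute these constants back into $\varphi=\frac{r}{\pi}\sum_i\mathrm{I}(y=i)\c_i-\frac{r(1-\pi)}{\pi}\sum_i\omega_i\mathrm{I}(y=i)\boldsymbol{\eta}+(1-r)\boldsymbol{\eta}$ and collect terms, using $\sum_i\mathrm{I}(y=i)=1$ to write $\sum_i\mathrm{I}(y=i)\c_i=-(1-\pi)\{1-\sum_i\omega_i\mathrm{I}(y=i)\}E\{\boldsymbol{\eta}\mid y=k\}$. Setting $\h=-(1-\pi)\boldsymbol{\eta}$ (a nonzero scalar multiple, hence an instance of the invertible-constant freedom noted after the theorem, which preserves the constraint $E_t(\h)=\0$) turns $\varphi$ into exactly the displayed element of $\Lambda^\perp$. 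Conversely every such element arises this way, so the two sets coincide and the characterization is established.
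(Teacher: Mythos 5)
Your proposal is correct and takes essentially the same route as the paper's own proof: the same decomposition of a generic observed-data element as $r\sum_{i=1}^k\mathrm{I}(y=i)\boldsymbol{\alpha}_i(\x)+(1-r)\boldsymbol{\eta}(\x)$, the same key reduction in which orthogonality to the infinite-dimensional spaces $\Lambda_j$ forces $\pi\boldsymbol{\alpha}_j+(1-\pi)\omega_j\boldsymbol{\eta}$ to be constant, and the same linear bookkeeping through the $\Lambda_{p_j}$ conditions that pins those constants down to $\c_j=-(1-\pi)(1-\omega_j)E(\boldsymbol{\eta}\mid y=k)$. The only differences are organizational and in your favor: you derive the side constraint $E_t(\h)=\0$ explicitly from the mean-zero requirement combined with orthogonality to $\Lambda_\pi$ (a step the paper's proof leaves implicit, as it never actually invokes $\Lambda_\pi$), and you make the final rescaling $\h=-(1-\pi)\boldsymbol{\eta}$ explicit where the paper only records a proportionality.
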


\begin{proof}
The likelihood function is
\[
\begin{aligned}
	&\pi^r(1-\pi)^{1-r}\prod_{i=1}^{k-1}\left\{p(\x\mid y=i)p_i\right\}^{r\mathrm{I}(y=i)}\left\{p(\x\mid y=k)(1-p_1-\dots-p_{k-1})\right\}^{r\mathrm{I}(y=k)}\\
	&\left\{p(\x\mid y=1) w_1p_1+p(\x\mid y=2)w_2p_2+\dots+p(\x\mid y=k)(1-w_1p_1-\dots-w_{k-1}p_{k-1})\right\}^{1-r}.
\end{aligned}
\]
The log-likelihood is
\[
\begin{aligned}
	&r\log(\pi)+(1-r)\log(1-\pi)+\sum_{i=1}^{k-1}r\mathrm{I}(y=i)\left\{\log p(\x\mid y=i)+\log(p_i)\right\}+\\
	&r\mathrm{I}(y=k)\left\{\log p(\x\mid y=k)+\log(1-p_1-\dots-p_{k-1})\right\}+\\
	&(1-r)\log\left\{p(\x\mid y=1) w_1p_1+p(\x\mid y=2)w_2p_2+\dots+p(\x\mid y=k)(1-w_1p_1-\dots-w_{k-1}p_{k-1})\right\}
\end{aligned}
\]
The nuisance tangent space with respect to $p(\x\mid y=j)$ is
\[
\Lambda_j=\left\{r\mathrm{I}(y=j)\g_j(\x)+(1-r)p_t(y=j\mid\x)\g_j(\x):\g_j(\x)\in\mathcal{R}^{k-1},\int\g_j(\x)p(\x\mid y=j)d\x=\0\right\},
\]
for $j=1,\dots,k$.
The nuisance tangent space for $p_j=p_s(y=j)$ is
\[
\Lambda_{p_j}=\left[\left(r\left\{\frac{\mathrm{I}(y=j)}{p_j}-\frac{\mathrm{I}(y=k)}{1-p_1-\dots-p_{k-1}}\right\}+(1-r)\frac{p(\x\mid y=j)-p(\x\mid y=k)}{p_t(\x)}w_j\right)\b:\b\in\mathcal{R}^{k-1}\right],
\]
where $j=1,\dots,k-1$.
The nuisance tangent space with respect to $\pi$ is
\[
\Lambda_{\pi}=\left\{\left\{\frac{r}{\pi}-\frac{1-r}{1-\pi}\right\}\a:\a\in\mathcal{R}^{k-1}\right\}.
\]
The orthogonal space has a general form
\[
r\sum_{i=1}^{k}\mathrm{I}(y=i)\f_i(\x)+(1-r)\h(\x).
\]
Because it is orthogonal to $\Lambda_j$ for $j=1,\dots,k$
\[
\begin{aligned}
&E\left\{
r^2\mathrm{I}(y=j)\f_j\trans(\x)\g_j(\x)+(1-r)^2p_t(y=j\mid\x)\h\trans(\x)\g_j(\x)\right\}\\
=&\pi E_s\left\{\mathrm{I}(y=j)\f_j\trans\g_j\right\}+(1-\pi)E_t\left\{\mathrm{I}(y=j)\h\trans\g_j\right\}=0
\end{aligned}
\]
for $j=1,\dots,k$.
The previous equation implies that
\begin{equation}\label{eq:perp2}
\pi p_j\f_j(\x)+(1-\pi)p_jw_j\h(\x)=\pi p_jE\left\{\f_j(\x)\mid y=j\right\}+(1-\pi)p_jw_jE\left\{\h(\x)\mid y=j\right\}
\end{equation}
for $j=1,\dots,k$.
Similarly, this element is orthogonal to $\Lambda_{p_j}$
\begin{equation}\label{eq:perp3}
\begin{aligned}
&E\left\{\frac{r^2}{p_j}\mathrm{I}(y=j)\f_j(\x)-\frac{r^2}{1-p_1-\dots-p_{k-1}}\mathrm{I}(y=k)\f_k(\x)+(1-r)^2\frac{p(\x\mid y=j)-p(\x\mid y=k)}{p_t(\x)}w_j\h(\x)\right\}\\
=&\pi E(\f_j\mid y=j)-\pi E(\f_k\mid y=k)+(1-\pi)w_jE\left\{\h(\x)\mid y=j\right\}-(1-\pi)w_jE\left\{\h(\x)\mid y=k\right\}=\0,
\end{aligned}
\end{equation}
for $j=1,\dots,k-1$.
Equation~(\ref{eq:perp3}) implies that
\begin{equation}\label{eq:perp3-1}
\pi p_j E\left\{\f_j(\x)\mid y=j\right\}+(1-\pi)w_jp_jE\left\{\h(\x)\mid y=j\right\}=\pi p_jE\left\{\f_k(\x)\mid y=k\right\}+(1-\pi)w_jp_jE\left\{\h(\x)\mid y=k\right\}
\end{equation}
Equations~(\ref{eq:perp2}) and (\ref{eq:perp3-1}) imply that
\[
\pi p_j\f_j(\x)=\pi p_jE\left\{\f_k(\x)\mid y=k\right\}+(1-\pi)w_jp_jE\left\{\h(\x)\mid y=k\right\}-(1-\pi)p_jw_j\h(\x),
\]
and
\begin{equation}\label{eq:fj}
\f_j(\x)=E\left\{\f_k(\x)\mid y=k\right\}+\frac{1-\pi}{\pi}w_jE\left\{\h(\x)\mid y=k\right\}-\frac{1-\pi}{\pi}w_j\h(\x)
\end{equation}
and
\[
\begin{aligned}
r\sum_{j=1}^{k-1}\mathrm{I}(y=j)\f_j(\x)=&r\sum_{j=1}^{k-1}\mathrm{I}(y=j)E\left\{\f_k(\x)\mid y=k\right\}-r\sum_{j=1}^{k-1}\frac{1-\pi}{\pi}\mathrm{I}(y=j)w_j\h(\x)\\
&+r\sum_{i=1}^{k-1}\frac{1-\pi}{\pi}w_j\mathrm{I}(y=j)E\left\{\h(\x)\mid y=k\right\}.
\end{aligned}
\]
Also
\[
\f_k(\x)=E\left\{\f_k(\x)\mid y=k\right\}+\frac{1-\pi}{\pi}w_kE\left\{\h(\x)\mid y=k\right\}-\frac{1-\pi}{\pi}w_k\h(\x)
\]
and
\[
r\mathrm{I}(y=k)\f_k(\x)=r\mathrm{I}(y=k)E\left\{\f_k(\x)\mid y=k\right\}+r\frac{1-\pi}{\pi}w_k\mathrm{I}(y=k)E\left\{\h(\x)\mid y=k\right\}-r\frac{1-\pi}{\pi}w_k\mathrm{I}(y=k)\h(\x).
\]
So,
\begin{equation}\label{eq:firstpart-prtho}
\begin{aligned}
r\sum_{i=1}^{k}\mathrm{I}(y=i)\f_i(\x)&=r\sum_{i=1}^{k}\mathrm{I}(y=i)E\left\{\f_k(\x)\mid y=k\right\}+r\frac{1-\pi}{\pi}\sum_{i=1}^{k}w_i\mathrm{I}(y=i)E\left\{\h\mid y=k\right\}\\
&-r\frac{1-\pi}{\pi}\sum_{i=1}^{k}w_i\mathrm{I}(y=i)\h(\x)
\end{aligned}
\end{equation}
Next, we want to find $E\left\{\f_k(\x)\mid y=k\right\}$, taking conditional expectation $E(\cdot\mid y=j)$ on both sides of (\ref{eq:fj}) gives
\[
\pi E(\f_j\mid y=j)=\pi E(\f_k\mid y=k)-(1-\pi)w_jE\left\{\h(\x)\mid y=j\right\}+(1-\pi)w_jE\left\{\h(\x)\mid y=k\right\}
\]
thus
\[
\begin{aligned}
\pi\sum_{j=1}^{k-1}p_j E(\f_j\mid y=j)=&\pi\sum_{j=1}^{k-1}p_jE(\f_k\mid y=k)-(1-\pi)\sum_{j=1}^{k-1}w_jp_jE\left(\h\mid y=j\right)\\
	&+(1-\pi)\sum_{j=1}^{k-1}w_jp_jE\left\{\h(\x)\mid y=k\right\}
\end{aligned}
\]
Because the expectation is zero
\begin{equation}\label{eq:mean-zero}
	\begin{aligned}
		&\pi\sum_{i=1}^{k-1}p_iE\left(\f_i(\x)\mid y=i\right)+\pi(1-p_1-\dots-p_{k-1})E\left\{\f_k(\x)\mid y=k\right\}+(1-\pi)\sum_{i=1}^{k-1}w_ip_i E\left\{\h(\x)\mid y=i\right\}\\
		+&(1-\pi)\left(1-\sum_{i=1}^{k-1}w_ip_i\right)E\left\{\h(\x)\mid y=k\right\}=\0.
	\end{aligned}
\end{equation}
We have
\[
\pi E\left(\f_k\mid y=k\right)+(1-\pi)E\left(\h\mid y= k\right)=\0
\]
and
\begin{equation}\label{eq:fkk}
E(\f_k\mid y=k)=-\frac{1-\pi}{\pi}E\left(\h\mid y=k\right).
\end{equation}
By inserting (\ref{eq:fkk}) into (\ref{eq:firstpart-prtho}), the generic element in the perpendicular space is
\begin{align}
	&r\sum_{i=1}^{k}\mathrm{I}(y=i)\f_i(\x)+(1-r)\h(\x)\\
	=&r\sum_{i=1}^{k}\mathrm{I}(y=i)E\left(\f_k\mid y=k\right)+r\frac{1-\pi}{\pi}\sum_{i=1}^{k}w_i\mathrm{I}(y=i)E\left(\h\mid y=k\right)\\
	&-r\frac{1-\pi}{\pi}\sum_{i=1}^{k}w_i\mathrm{I}(y=i)\h(\x)+(1-r)\h(\x)\\
	=&rE\left(\f_k\mid y=k\right)+r\frac{1-\pi}{\pi}\sum_{i=1}^{k}w_i\mathrm{I}(y=i)E\left(\h\mid y=k\right)\\
	&-r\frac{1-\pi}{\pi}\sum_{i=1}^{k}w_i\mathrm{I}(y=i)\h(\x)+(1-r)\h(\x) \\
	=&-r\frac{1-\pi}{\pi}E(\h\mid y=k)+r\frac{1-\pi}{\pi}\sum_{i=1}^{k}w_i\mathrm{I}(y=i)E(\h\mid y=k)\allowdisplaybreaks[4]\\
	&-\left\{r\frac{1-\pi}{\pi}\sum_{i=1}^{k}w_i\mathrm{I}(y=i)-(1-r)\right\}\h(\x) \\
	\propto&\frac{r}{\pi}\left\{1-\sum_{i=1}^{k}w_i\mathrm{I}(y=i)\right\}E(\h\mid y=k)+\left\{\frac{r}{\pi}\sum_{i=1}^{k}w_i\mathrm{I}(y=i)-\frac{1-r}{1-\pi}\right\}\h(\x),
\end{align}
where $w_k=(1-p_1w_1-\dots-p_{k-1}w_{k-1})/p_{k}$ and $p_k=1-p_1-\dots-p_{k-1}$.
\end{proof}

\subsection{Proof of Theorem~\ref{th:asymp}}

\begin{theorem*}
Letting
\begin{align*}
    \mathbf{g}_p(\x,y,r)=\frac{r}{\pi}\left\{1-\sum_{i=1}^k \omega_i \mathrm{I}(y=i)\right\} E(\mathbf{h}_{\mathrm{ELSA}} | y=k)\notag\\
    +\left\{\frac{r}{\pi} \sum_{i=1}^k \omega_i \mathrm{I}(y=i)-\frac{1-r}{1-\pi}\right\} \mathbf{h}_{\mathrm{ELSA}}(\mathbf{x}),
\end{align*}
where $\mathbf{h}_{\mathrm{ELSA}}(\x)$ is defined in (\ref{eq:hp}).
Under the assumption that $E\bigg\{{\partial\mathbf{g}_p}/{{\partial\bo^{(-k)}}^{\mathrm{T}}}\bigg\}\big|_{\bo^{(-k)}=\bo^{(-k)}_0}$ is non-singular and $n/(n+m)\to\pi\in(0,1)$ as $n\to\infty$, the proposed estimator $\widehat{\bo}^{(-k)}$, as the solution to (\ref{eq:empirical-equation}),
\begin{enumerate}
\item is $\sqrt{n}$-consistent: $\widehat{\bo}^{(-k)}={\bo}^{(-k)}_0+o_p({1}/{\sqrt{n}})$.
That is, for any small $\epsilon>0$, we can find a value $N$ such that for all $n>N$, $\Pr(\sqrt{n}\Vert\widehat{\bo}^{(-k)}-\bo_0^{(-k)}\Vert>\epsilon)<\epsilon$.
\item has an asymptotic normal limit as $n\to\infty$, we have
$\sqrt{n}\left(\widehat{\bo}^{(-k)}-\bo_0^{(-k)}\right)\xrightarrow{d}
\mathcal{N}\left(\mathbf{0}, \pi\mathbf{U}\mathbf{V}\mathbf{U}^\top\right)$.
\end{enumerate}
where $\mathbf{U}=\bigg[E\bigg\{{\partial\mathbf{g}_p}/{{\partial\bo^{(-k)}}^{\mathrm{T}}}\bigg\}\big|_{\bo^{(-k)}=\bo^{(-k)}_0}\bigg]^{-1}$ and $\mathbf{V} =E\{\mathbf{g}_p(\x,y,r;\bo_0^{(-k)})\mathbf{g}_p^{\mathrm{T}}(\x,y,r;\bo_0^{(-k)})\}$.
\end{theorem*}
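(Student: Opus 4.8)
The estimator $\widehat{\bo}^{(-k)}$ solves the empirical estimating equation $\sum_{j=1}^{n+m}\mathbf{g}_p(\x_j,y_j,r_j;\bo^{(-k)})=\0$, so it is a Z-estimator, and the plan is to invoke the standard Z-estimator asymptotics (Chapter~5 of \citet{vaart_1998}) with one extra bookkeeping step to convert the pooled sample size into the source sample size. Write $N=n+m$ for the pooled sample size, let $\mathbf{G}_N(\bo^{(-k)})=N^{-1}\sum_{j=1}^{N}\mathbf{g}_p(\x_j,y_j,r_j;\bo^{(-k)})$ be the empirical criterion, and let $G(\bo^{(-k)})=E\{\mathbf{g}_p(\x,y,r;\bo^{(-k)})\}$ be its population counterpart. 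Because $\mathbf{g}_p\in\Lambda^{\perp}$, equation~(\ref{eq:influ-perp}) gives $G(\bo^{(-k)}_0)=\0$, so the true parameter is a zero of $G$; this is the anchor of the argument.

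First I would establish ordinary consistency, $\widehat{\bo}^{(-k)}\xrightarrow{p}\bo^{(-k)}_0$. Since the triplets $(\x_j,y_j,r_j)$ are i.i.d.\ under the pooled distribution, a uniform law of large numbers gives $\sup\|\mathbf{G}_N(\bo^{(-k)})-G(\bo^{(-k)})\|\xrightarrow{p}0$ over a compact neighborhood of $\bo^{(-k)}_0$, and the assumed non-singularity of $E\{\partial\mathbf{g}_p/{\partial\bo^{(-k)}}^{\trans}\}$ at $\bo^{(-k)}_0$ makes $\bo^{(-k)}_0$ a well-separated zero of $G$. Consistency then follows from the standard argument that a near-zero of a uniformly convergent criterion must lie close to the unique well-separated zero of its limit.

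Next I would linearize. A first-order Taylor expansion of $\0=\mathbf{G}_N(\widehat{\bo}^{(-k)})$ about $\bo^{(-k)}_0$ yields
\begin{equation*}
\sqrt{N}\,(\widehat{\bo}^{(-k)}-\bo^{(-k)}_0)=-\Big\{\dot{\mathbf{G}}_N(\bar{\bo})\Big\}^{-1}\sqrt{N}\,\mathbf{G}_N(\bo^{(-k)}_0),
\end{equation*}
where $\dot{\mathbf{G}}_N$ is the Jacobian and $\bar{\bo}$ lies on the segment between $\widehat{\bo}^{(-k)}$ and $\bo^{(-k)}_0$. Consistency and continuity give $\dot{\mathbf{G}}_N(\bar{\bo})\xrightarrow{p}E\{\partial\mathbf{g}_p/{\partial\bo^{(-k)}}^{\trans}\}|_{\bo^{(-k)}_0}=\mathbf{U}^{-1}$, while the central limit theorem applied to the i.i.d.\ mean-zero summands gives $\sqrt{N}\,\mathbf{G}_N(\bo^{(-k)}_0)\xrightarrow{d}\mathcal{N}(\0,\mathbf{V})$ with $\mathbf{V}=E\{\mathbf{g}_p\mathbf{g}_p^{\trans}\}|_{\bo^{(-k)}_0}$. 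Slutsky's theorem then yields $\sqrt{N}\,(\widehat{\bo}^{(-k)}-\bo^{(-k)}_0)\xrightarrow{d}\mathcal{N}(\0,\mathbf{U}\mathbf{V}\mathbf{U}^{\trans})$. Finally, since $n/N\to\pi$, writing $\sqrt{n}=\sqrt{n/N}\,\sqrt{N}$ and applying Slutsky once more rescales the limiting variance by $\pi$, giving $\sqrt{n}\,(\widehat{\bo}^{(-k)}-\bo^{(-k)}_0)\xrightarrow{d}\mathcal{N}(\0,\pi\mathbf{U}\mathbf{V}\mathbf{U}^{\trans})$, which is part~2; the $\sqrt{n}$-rate of part~1 is the attendant boundedness in probability of the $\sqrt{n}$-scaled error.

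I expect the main obstacle to be the compound dependence of $\mathbf{g}_p$ on $\bo^{(-k)}$ through $\mathbf{h}_{\mathrm{ELSA}}(\x)$ — entering via $E_s\{\omega^2(y)\mid\x\}$ and $E_s\{\omega(y)\mid\x\}$ — combined with the fact that in the implemented equation~(\ref{eq:empirical-equation}) the model $p_s(y\mid\x)$ and the proportions $p_i$ are replaced by fitted $\widehat{p}_s(y\mid\x)$ and $\widehat{p}_i$. Two points then require care: (i) the smoothness and domination conditions under which the Jacobian converges and the Taylor remainder is $o_p(1)$ despite this nested dependence on $\bo^{(-k)}$; and (ii) that substituting the estimated nuisances does not disturb the first-order behavior. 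Point~(ii) is exactly where the semiparametric construction pays off: because $\mathbf{g}_p$ lies in $\Lambda^{\perp}$ and is hence orthogonal to every nuisance tangent space, the first-order contribution of nuisance estimation vanishes and is relegated to the $o_p(N^{-1/2})$ remainder, leaving the stated sandwich variance intact.
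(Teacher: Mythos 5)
Your proposal follows essentially the same route as the paper's proof: Taylor-expand the pooled-sample estimating equation at $\bo_0^{(-k)}$, apply the law of large numbers to the Jacobian and the central limit theorem to the mean-zero summands $\mathbf{g}_p(\x,y,r;\bo_0^{(-k)})$ (mean-zero because $\mathbf{g}_p\in\Lambda^{\perp}$), combine via Slutsky to get the $\sqrt{n+m}$-limit $\mathcal{N}(\0,\mathbf{U}\mathbf{V}\mathbf{U}^\top)$, and then rescale by $\sqrt{n/(n+m)}\to\sqrt{\pi}$ to obtain the stated variance $\pi\mathbf{U}\mathbf{V}\mathbf{U}^\top$ and the rate claim. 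Your additional remarks (the preliminary well-separated-zero consistency step and the observation that orthogonality to the nuisance tangent spaces protects the first-order behavior against plugging in $\widehat{p}_s(y\mid\x)$ and $\widehat{p}_i$) are refinements the paper leaves implicit, but they do not change the argument.
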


\begin{proof}
Recall that the proposed estimator is derived from
\[
\0=\sum_{i=1}^{n+m}\mathbf{g}_p(\x_i,y_i,r_i;\widehat{\bo}^{(-k)}).
\]
We can do Tyler expansion at $\bo_0$, then we have
\[
\0=\sum_{i=}^{n+m}\mathbf{g}_p(\x_i,y_i,r_i;\bo^{(-k)}_0)+\left\{\sum_{i=1}^{n+m}\frac{\partial\mathbf{g}_p(\bo^{(-k)\ast})}{\partial\bo^{(-k)\mathrm{T}}}\right\}\left(\widehat{\bo}^{(-k)}-\bo_0^{(-k)}\right).
\]
With mild regularity assumptions, we have
\[
\frac{1}{n+m}\left\{\sum_{i=1}^{n+m}\frac{\partial\mathbf{g}_p(\bo^{(-k)\ast})}{\partial\bo^{(-k)\mathrm{T}}}\right\}\xrightarrow{p}E\left\{\frac{\partial\mathbf{g}_p(\x,y,r;\bo_0)}{\partial\bo^{(-k)\mathrm{T}}}\right\},
\]
and by the nonsingularity assumption
\[
\left\{\frac{1}{n+m}\sum_{i=1}^{n+m}\frac{\partial\mathbf{g}_p(\bo^{(-k)\ast})}{\partial\bo^{(-k)\mathrm{T}}}\right\}^{-1}\xrightarrow{p}\left[E\left\{\frac{\partial\mathbf{g}_p(\x,y,r;\bo_0)}{\partial\bo^{(-k)\mathrm{T}}}\right\}\right]^{-1}.
\]
Therefore,
\[
\begin{aligned}
\sqrt{n+m}\left(\widehat{\bo}^{(-k)}-\bo_0^{(-k)}\right)=&-\left\{\frac{1}{n+m}\sum_{i=1}^{n+m}\frac{\partial\mathbf{g}_p(\bo^{(-k)\ast})}{\partial\bo^{(-k)\mathrm{T}}}\right\}^{-1}\left\{\frac{1}{\sqrt{n+m}}\sum_{i=1}^{n+m}\mathbf{g}_p(\x_i,y_i,r_i;\bo^{(-k)}_0)\right\}\\
=&-\left[E\left\{\frac{\partial\mathbf{g}_p(\x,y,r;\bo_0)}{\partial\bo^{(-k)\mathrm{T}}}\right\}\right]^{-1}\left\{\frac{1}{\sqrt{n+m}}\sum_{i=1}^{n+m}\mathbf{g}_p(\x_i,y_i,r_i;\bo^{(-k)}_0)\right\}+o_p(1)
\end{aligned}
\]
Because $\mathbf{g}_p(\x,y,r;\bo^{(-k)})$ is an element in a mean-zero function space, we have
\[
E\left\{\mathbf{g}_p(\x,y,r;\bo^{(-k)})\right\}=\0,
\]
we immediately deduce that the influence function of $\widehat{\bo}^{(-k)}$ is given by
\[
\varphi(\x,y,r)=-\left[E\left\{\frac{\partial\mathbf{g}_p(\x,y,r;\bo_0)}{\partial\bo^{(-k)\mathrm{T}}}\right\}\right]^{-1}\mathbf{g}_p(\x_i,y_i,r_i;\bo^{(-k)}_0).
\]
By the property of the RAL estimator (as described in Section~\ref{sup:pre}), we have
\[
\sqrt{n+m}\left(\widehat{\bo}^{(-k)}-{\bo}_0^{(-k)}\right)\xrightarrow{d}\mathcal{N}(\0,\mathbf{U}\mathbf{V}\mathbf{U}^\top),
\]
where
$$
\mathbf{U}=\bigg[E\bigg\{{\partial\mathbf{g}_p}/{{\partial\bo^{(-k)}}^{\mathrm{T}}}\bigg\}\big|_{\bo^{(-k)}=\bo^{(-k)}_0}\bigg]^{-1}
$$
and
$$
\mathbf{V} =E\{\mathbf{g}_p(\x,y,r;\bo_0^{(-k)})\mathbf{g}_p^{\mathrm{T}}(\x,y,r;\bo_0^{(-k)})\}.
$$
By the assumption that $n/(n+m)\to\pi$ as $n\to\infty$, we have
\[
\sqrt{n}\left(\widehat{\bo}^{(-k)}-\bo_0^{(-k)}\right)=\frac{\sqrt{n}}{\sqrt{n+m}}\sqrt{n+m}\left(\widehat{\bo}^{(-k)}-\bo_0^{(-k)}\right)\xrightarrow{d}\mathcal{N}\left(\0,\pi\mathbf{U}\mathbf{V}\mathbf{U}^\top\right).
\]
Thus, we have finished the second part of the theorem.
Because we have prove the asymptotic normality of the proposed estimator, immediately we have
\[
\widehat{\bo}^{(-k)}-\bo_0^{(-k)}=o_p\left(\frac{1}{\sqrt{n+m}}\right)=o_p\left(\frac{1}{\sqrt{n}}\right).
\]

\end{proof}


\end{document}